\documentclass[12pt]{arxiv} 
\usepackage{natbib} 
\usepackage{mathtools} 
\usepackage{booktabs} 
\usepackage{tikz} 
\usepackage[american]{babel}
\usepackage[draft]{tikzpeople}
\usepackage{amssymb}
\usepackage{cleveref}

\usetikzlibrary{shapes.geometric,shapes.symbols, automata, matrix}

\newcommand{\myset}[1]{\{ #1 \}}
\newcommand{\tuple}[1]{\langle #1 \rangle}
\newcommand{\setNelems}[1]{\{ #1_i \}_{i=1}^N}
\newcommand{\setNTelems}[1]{\{ #1_i^{(t)} \}_{i=1,t=1}^{N,T}}

\newcommand{\parents}{\mathbf{\mathrm{Pa}}}

\newcommand{\desc}{\mathbf{\mathrm{Desc}}}

\newcommand{\scmu}[1]{\mathcal{M}_{#1}}
\newcommand{\scmdagu}[1]{\mathcal{G}_{\scm{}_{#1}}}
\newcommand{\envarsu}[1]{\mathcal{X}_{#1}}
\newcommand{\exvarsu}[1]{\mathcal{U}_{#1}}
\newcommand{\structfuncsu}[1]{\mathcal{F}_{#1}}
\newcommand{\probdistsu}[1]{\mathcal{P}_{#1}}
\newcommand{\scmsignatureu}[1]{\tuple{ \envarsu{#1}, \exvarsu{#1}, \structfuncsu{#1}, \probdistsu{#1} }}

\newcommand{\scm}{\scmu{}}
\newcommand{\scmdag}{\scmdagu{}}
\newcommand{\envars}{\envarsu{}}
\newcommand{\exvars}{\exvarsu{}}
\newcommand{\structfuncs}{\structfuncsu{}}
\newcommand{\probdists}{\probdistsu{}}
\newcommand{\scmsignature}{\scmsignatureu{}}

\newcommand{\doopt}{\textrm{do}}
\newcommand{\dointv}[2]{\doopt(#1=#2)}

\newcommand{\ctfop}{\textrm{ctf}}
\newcommand{\ctf}[4]{\ctfop(\cdot_{#3=#4}\vert#1=#2)}
\newcommand{\ctfinv}[2]{\ctfop(\cdot_{#1=#2})}
\newcommand{\ctfscm}{\scmu{\ctfop}}
\newcommand{\ctfscmsignature}{\scmsignatureu{\ctfop}}

\newcommand{\iiop}{\textrm{do}^\star}
\newcommand{\iiintv}[2]{\iiop(#1=#2)}
\newcommand{\finscm}{\scmu{\textrm{fin}}}
\newcommand{\finscmsignature}{\scmsignatureu{\textrm{fin}}}

\newcommand{\abd}{\textrm{abd}}
\newcommand{\twin}{\textrm{twin}}
\newcommand{\rep}{\textrm{rep}}
\newcommand{\fin}{\textrm{fin}}

\newcommand{\dd}[1]{``#1''}

\newtheorem{assumption}{Assumption}
\newtheorem{problem}{Problem Statement}
\newtheorem{mydef}{Definition}
\crefname{mydef}{def}{defs}
\Crefname{mydef}{Definition}{Definiions}
\newtheorem{myprop}{Proposition}
\newtheorem{mylemma}{Lemma}

\newcommand{\rebuttal}[1]{{#1}}

\newcommand{\narrative}[1]{}

\title{Teleological Inference in Structural Causal Models via Intentional Interventions}

\clearauthor{\Name{Dario Compagno} \Email{dario.compagno@parisnanterre.fr}\AND
 \Name{Fabio Massimo Zennaro} \Email{fabio.zennaro@uib.no}\\
 \addr }
  
\begin{document}
  
\maketitle

\begin{abstract}
Structural causal models (SCMs) were conceived to formulate and answer causal questions. This paper shows that SCMs can also be used to formulate and answer teleological questions, concerning the intentions of a state-aware, goal-directed agent intervening in a causal system. We review limitations of previous approaches to modeling such agents, and then introduce intentional interventions, a new time-agnostic operator that induces a twin SCM we call a structural final model \rebuttal{(SFM)}. SFMs treat observed values as the outcome of intentional interventions and relate them to the counterfactual conditions of those interventions (what would have happened had the agent not intervened). We show how SFMs can be used to empirically detect agents and to discover their intentions.
\end{abstract}

\section{Introduction}\label{sec:intro}

\narrative{Interventions and causal modeling.}
Contemporary causal research looks for causal relationships by realizing or simulating \textit{interventions} into causal systems. Indeed, \dd{no causation without manipulation} is a well-known motto, implying a certain notion of causality and causal research \citep{holland1986}.
Several mathematical formalisms complying with this understanding have been suggested \citep{rubin1974estimating,pearl2009causality,dawid2021decision}; among them, the framework of \emph{structural causal models} (SCM) \citep{pearl2009causality, peters2017elements} is widely adopted in artificial intelligence (AI) and machine learning (ML). Actual or simulated actions taken by experimenters are means for putting causal hypotheses to the test.

Can agents and their interventions themselves also become the object of modeling? The study of agents and their actions is a legitimate object of scientific research (especially in biology, psychology and the social sciences) although not necessarily well captured as causal systems \citep{anscombe1957, vonwright1971, dennett1987}. 
\narrative{Limitation of causal modeling in teleological analysis.}
AI and ML researchers are interested too in studying \emph{intentions} \citep{ward2024reasons, friedenberg2025intents}, that is, explaining how a state-aware and goal-driven artificial agent affects and is affected by its interactions with a causal system. In particular, researchers may be interested in knowing when spurious dependencies are introduced by agents (\emph{agent detection}) or when specific outcomes in the system explain the behavior of an agent (\emph{intention discovery}). However, standard SCMs do not model agents nor intentions, and they need to be extended to account for intentional interventions. Consider the following examples:

\begin{example}[Has somebody turned on the heater?]\label{ex:heating_causal}
We are interested in modeling the heating system in a house through an SCM. We consider three binary variables of interest: weather ($W$), the state of the heater ($H$), and room temperature ($T$). We express the relations of cause and effect as in \Cref{fig:basic-heating-system}(a). 
Suppose also that we collect data as in the table of \Cref{fig:basic-heating-system}(a). Is the \rebuttal{collected} data due to an agent intervening upon the heater?
\end{example}

\begin{example}[Why do people smoke?]\label{ex:smoking_causal}
We are interested in modeling smoking behavior ($S$) in association to two variables: pleasure ($P$) and lung damage \rebuttal{in the form of tar deposits} ($D$). We express the relations of cause and effect as in \Cref{fig:basic-heating-system}(b). 
Suppose also that there is an agent actually smoking and experiencing the pleasure and \rebuttal{tar deposits} caused by it. Does the agent smoke in order to get lung damage?
\end{example}

\begin{figure}
    \centering
    \begin{tikzpicture}   
        [node distance = 1cm, thick,
        arrow/.style = {draw, ->},
        circ/.style = {draw, ellipse}]
    
        \begin{scope}[xshift=0]       
            \node[circ, text=black, draw=black, fill=white] at (-1.2, 0.5) (W) {$W$};
            \node[circ, text=black, draw=black, fill=white] at (0, 0) (T) {$T$};
            \node[circ, text=black, draw=black, fill=white] at (1.2, 0.5) (H) {$H$};
    
            \node[alice,minimum size=.8cm] at (3,0.25) (Ag) {};
            
            \draw[arrow] (W) edge [bend left=0] (T);
            \draw[arrow] (H) edge [bend left=0] (T);
            \draw[arrow,dotted] (2.5,0.4) to (1.8,0.4);
            \draw[arrow, dotted] (1.8,0.2) to (2.5,0.2);
    
            \matrix (m) [matrix of nodes,
                   nodes in empty cells,
                   nodes={anchor=center,minimum width=.6cm,font=\small}] 
              at (5, 0.3) 
              {
                $W$ & $T$ & $H$ \\
                0 & 1 & 1 \\
                0 & 1 & 1 \\
                1 & 1 & 0 \\
                0 & 1 & 1 \\
              };
                \draw (m-1-1.north west) -- (m-5-1.south west);
                \draw (m-1-1.north east) -- (m-5-1.south east);
                \draw (m-1-2.north east) -- (m-5-2.south east);
                \draw (m-1-3.north east) -- (m-5-3.south east);
                \draw (m-2-1.north west) -- (m-2-3.north east);

            \node at (2,-1.5) {(a)};
        \end{scope}

        \begin{scope}[xshift=260]
            \node[circ, text=black, draw=black, fill=white] at (-1.2, 0) (P) {$P$};
            \node[circ, text=black, draw=black, fill=white] at (0, .5) (S) {$S$};
            \node[circ, text=black, draw=black, fill=white] at (1.2, 0) (D) {$D$};
    
            \node[alice,minimum size=.8cm] at (3,0.25) (Ag) {};
            
            \draw[arrow] (S) edge [bend left=0] (P);
            \draw[arrow] (S) edge [bend left=0] (D);
            \draw[arrow,dotted] (2.5,0.1) to (1.8,0.1);
            \draw[arrow, dotted] (1.8,0.3) to (2.5,0.3);

           \node at (1,-1.5) {(b)};
        \end{scope}
                
        \end{tikzpicture}
    \caption{(a) Causal model for a heating system defined over variables $W,T,H$ (for Weather, room Temperature, Heater status); we \rebuttal{collect} the data in the adjacent table. (b) Causal model for smoking defined over variables $S,P,D$ (for Smoking, Pleasure, lung Damage).}
    \label{fig:basic-heating-system}
\end{figure}
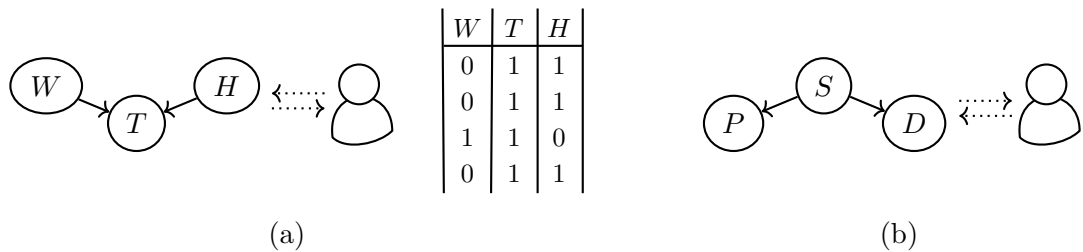

\narrative{Previous approaches and novelty of our approach.} The modeling adopted in \Cref{ex:heating_causal} and \Cref{ex:smoking_causal}, although grounded from a causal point of view, is unsuited to reason about agents.
In \Cref{ex:heating_causal}, \rebuttal{the dataset might show a} comfortable temperature under any weather condition, giving rise to a perfect dependency between weather and heating, but the agentless SCM fails to account for this dependency; that is, we cannot explain the data with the given SCM. 
In \Cref{ex:smoking_causal} we observe that pleasure and damage co-occur causally, but an SCM offers no way to express the dependence of an action on a \textit{subset} of its effects \citep{compagno2025}; that is, we cannot state whether or not an agent smokes in order to get pleasure and not lung damage. 

In this paper, we argue that existing modeling approaches based on SCMs deal with the problem of representing actions and agents in an unsatisfactory way, by factoring agents' intentions in exogenous or endogenous variables, or by switching to time-based models.
We suggest that these approaches have inherent conceptual shortcomings, stemming from the treatment of intentions as causal variables \citep{anscombe1957}.
We propose instead a general approach to the modeling of intentions in time-agnostic settings based on the definition of \emph{intentional interventions}, an operator akin to standard intervention, which can be applied to an SCM and which generates a new model in the form of a \textit{structural final model} \rebuttal{(SFM)}.

An intentional intervention captures the way a state-aware and goal-driven agent modifies a causal system, not just assigning an arbitrary value to a target variable (like standard interventions), but \dd{listening to} \citep{pearl2009causality} a set of variables in order to determine what value to assign. 
A SFM defined by an intentional intervention is constructed by the twinning of two SCMs: one describing what has happened as the result of an intentional intervention, and a second one describing what would have happened had the agent \textit{not} intervened.
Relative to \Cref{ex:heating_causal}, agents turn on heaters depending on the room temperature, meaning that they would not intervene were it already warm; relative to \Cref{ex:smoking_causal}, agents smoke depending on pleasure, meaning that they would not smoke if they did not derive pleasure (while they would keep smoking if this did not produce \rebuttal{tar deposits}).

Thus, from a teleological perspective, what determines the value of an intervened variable are not its \textit{factual ancestors} but its \textit{counterfactual descendants}. Intervened variables do not have causes in an SCM \citep{pearl2009causality}, but have intended effects which can be expressed as counterfactual conditions of the model.
We show through simple examples that SFMs can be used for \emph{teleological} \textit{inference}, especially in its two most basic forms: the empirical detection of agents (such as in \Cref{ex:heating_causal}) and the discovery of their intentions (such as in \Cref{ex:smoking_causal}). 
SFMs are thus designed as a framework to easily formulate teleological questions and answer them empirically with data and assumptions about their independencies.
Notice that the modeling strategy we propose is an \textit{extension} to the standard framework of SCMs and not a replacement; the standard way of building SCMs and perform causal inference remains crucial for performing teleological inferences.

\narrative{Potential impact of our work.}
Detecting that a causal system has been intentionally intervened by an agent (as in \Cref{ex:heating_causal}) is important in contexts such as computer security (is an agent probing available services?) or in digital domains (is a large language model interacting with tools?). Discovering the subset of effects motivating an action (as in \Cref{ex:smoking_causal}) is relevant in contexts including health (which effects of a treatment are so negatively perceived so to compromise its follow-ups?) or social policies (what makes citizens prefer cars over public transportation?). SFMs provide a new tool for explainability, opening the way to uncovering the aims of a system (\textit{teleological interpretability}) instead of the calculations leading towards that outcome (mechanistic interpretability). Moreover, as human behavior is often explained in terms of intentions \citep{bratman1987, tomasello1999}, the ability to model them formally and to validate teleological hypotheses empirically has evident implications.

\section{Background}
In this section we review the basic concepts about causality and SCMs. Further discussion on the notation, graph-theoretical definitions are available in \Cref{app:notation}.

\begin{figure}
    \centering
    \begin{tikzpicture}
        [node distance = .8cm, thick,
        arrow/.style = {draw, ->},
        circ/.style = {draw, ellipse},
        font=\tiny]

        \begin{scope}[xshift=0]
            \node[circ, text=black, draw=black, fill=white] at (0, 1.2) (X) {$X$};
            \node[circ, text=black, draw=black, fill=white] at (0, 0) (Y) {$Y$};
            \node[circ, text=black, draw=black, fill=white] at (0, 2.4) (Z) {$Z$};
            \node[circ, text=black, draw=black, fill=white, dashed] at (0.8, 1.8) (UX) {$U_X$};
            \node[circ, text=black, draw=black, fill=white, dashed] at (0.8, 0.6) (UY) {$U_Y$};
            \node[circ, text=black, draw=black, fill=white, dashed] at (0.8, 3.0) (UZ) {$U_Z$};
    
            \draw[arrow] (UX) edge [bend left=0] (X);
            \draw[arrow] (UY) edge [bend left=0] (Y);
            \draw[arrow] (UZ) edge [bend left=0] (Z);
            \draw[arrow] (X) edge [bend left=0] (Y);
            \draw[arrow] (Z) edge [bend left=0] (X);

            \node at (0.5,-0.9) {\normalsize (a)};
        \end{scope}

        \begin{scope}[xshift=60]
            \node[circ, text=black, draw=black, fill=white] at (0, 1.2) (X) {$1$};
            \node[circ, text=black, draw=black, fill=white] at (0, 0) (Y) {$Y$};
            \node[circ, text=black, draw=black, fill=white] at (0, 2.4) (Z) {$Z$};
            \node[circ, text=black, draw=black, fill=white, dashed] at (0.8, 1.8) (UX) {$U_X$};
            \node[circ, text=black, draw=black, fill=white, dashed] at (0.8, 0.6) (UY) {$U_Y$};
            \node[circ, text=black, draw=black, fill=white, dashed] at (0.8, 3.0) (UZ) {$U_Z$};
    
            \draw[arrow] (UY) edge [bend left=0] (Y);
            \draw[arrow] (UZ) edge [bend left=0] (Z);
            \draw[arrow] (X) edge [bend left=0] (Y);

            \node at (0.5,-0.9) {\normalsize (b)};
        \end{scope}

        \begin{scope}[xshift=120]
            \node[circ, text=black, draw=black, fill=white] at (0, 1.2) (X) {$1$};
            \node[circ, text=black, draw=black, fill=white] at (0, 0) (Y) {$Y$};
            \node[circ, text=black, draw=black, fill=white] at (0, 2.4) (Z) {$Z$};
            \node[circ, text=black, draw=black, fill=white, dashed] at (0.8, 1.8) (UX) {$U_X$};
            \node[circ, text=black, draw=black, fill=white, dashed] at (0.8, 0.6) (UY) {$U_Y$};
            \node[circ, text=black, draw=black, fill=white, dashed] at (0.8, 3.0) (UZ) {$U_Z$};
            \node[circ, text=gray, draw=gray, fill=white] at (1.6, 1.2) (X1) {$0$};
            \node[circ, text=gray, draw=gray, fill=white] at (1.6, 0) (Y1) {$Y^*$};
            \node[circ, text=gray, draw=gray, fill=white] at (1.6, 2.4) (Z1) {$Z^*$};
    
            \draw[arrow] (UX) edge [bend left=0] (X);
            \draw[arrow] (UY) edge [bend left=0] (Y);
            \draw[arrow] (UZ) edge [bend left=0] (Z);
            \draw[arrow] (UY) edge [bend left=0, gray] (Y1);
            \draw[arrow] (UZ) edge [bend left=0, gray] (Z1);
            \draw[arrow] (X) edge [bend left=0] (Y);
            \draw[arrow] (Z) edge [bend left=0] (X);
            \draw[arrow] (X1) edge [bend left=0, gray] (Y1);

            \node at (0.8,-0.9) {\normalsize (c)};
        \end{scope}

        \begin{scope}[xshift=210]
            \node[circ, text=gray, draw=gray, fill=white] at (0, 1.2) (X) {$X$};
            \node[circ, text=gray, draw=gray, fill=white] at (0, 0) (Y) {$Y$};
            \node[circ, text=gray, draw=gray, fill=white] at (0, 2.4) (Z) {$Z$};
            \node[circ, text=black, draw=black, fill=white, dashed] at (0.8, 1.8) (UX) {$U_X$};
            \node[circ, text=black, draw=black, fill=white, dashed] at (0.8, 0.6) (UY) {$U_Y$};
            \node[circ, text=black, draw=black, fill=white, dashed] at (0.8, 3.0) (UZ) {$U_Z$};
            \node[circ, text=black, draw=black, fill=white] at (1.6, 1.2) (X1) {$f(y)$};
            \node[circ, text=black, draw=black, fill=white] at (1.6, 0) (Y1) {$Y^\star$};
            \node[circ, text=black, draw=black, fill=white] at (1.6, 2.4) (Z1) {$Z^\star$};
    
            \draw[arrow] (UX) edge [bend left=0, gray] (X);
            \draw[arrow] (UY) edge [bend left=0, gray] (Y);
            \draw[arrow] (UZ) edge [bend left=0, gray] (Z);
            \draw[arrow] (UY) edge [bend left=0] (Y1);
            \draw[arrow] (UZ) edge [bend left=0] (Z1);
            \draw[arrow,gray] (X) edge [bend left=0] (Y);
            \draw[arrow,gray] (Z) edge [bend left=0] (X);
            \draw[arrow] (X1) edge [bend left=0] (Y1);
            \draw[arrow] (Y) edge [bend left=40] (X1);

            \node at (0.8,-0.9) {\normalsize (d)};
        \end{scope}
        
        \end{tikzpicture}
    \caption{(a) The DAG for a generic SCM $\scm$; solid nodes represent endogenous variables, dashed nodes represent exogenous variables. 
    (b) The DAG after intervention $\dointv{X}{1}$. 
    (c) The twin graph for the counterfactual $\ctf{X}{1}{X}{0}$; black nodes represent actual variables, gray nodes represent counterfactual variables.
    (d) The twin graph for the intentional intervention $\iiintv{X^\star}{f(y)}$.
    }
    \label{fig:SCM-graphs}
\end{figure}
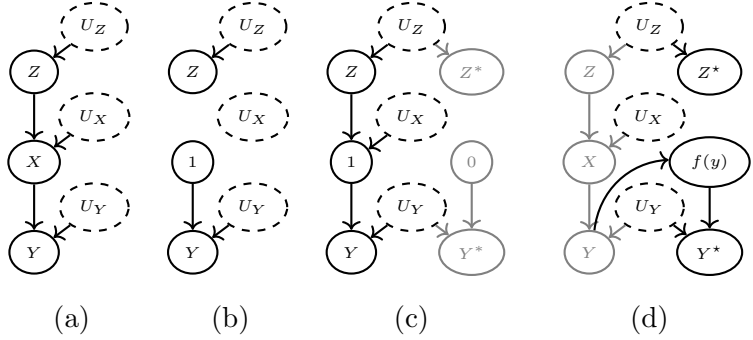

\begin{mydef}[SCM \cite{pearl2009causality}] A structural causal model (SCM) $\scm$ is a tuple $\scmsignature$ where $\envars = \setNelems{X}$ is a set of $N$ endogenous variables representing quantities relevant to the modeler; $\exvars = \setNelems{U}$ is a set of $N$ exogenous variables capturing variability outside the model; $\structfuncs = \setNelems{f}$ is a set of $N$ structural functions, one for each endogenous node, such that the value of $X_i$ is computed as $X_i = f_i(\envars\setminus X_i, U_i)$; $\probdists = \setNelems{P}$ is a set of $N$ probability distributions, one for each exogenous node. 
\end{mydef}

An SCM $\scm$ immediately induces a directed graph $\scmdag = \tuple{\mathbf{V}, \mathbf{E}}$ over vertices $\mathbf{V}=\envars$ and edges $\mathbf{E}$ such that $(X_i,X_j) \in \mathbf{E}$ if $X_i$ is among the arguments of $f_j$; see, for instance, \Cref{fig:SCM-graphs}(a). We will assume that all our SCMs induce directed acyclic graphs (DAG); we will refer to the graph-theoretic parents of a node $X$ as $\parents(X)$ and to graph-theoretic descendants as $\desc(X)$. 
For the sake of simplicity, we also assume that all the distributions in $\probdists$ are independent from each other; together with acyclicity, this implies that our SCMs are Markovian \citep{pearl2009causality}. 
SCMs allow us to model experiments in the form of interventions. For the sake of simplicity, and in line with the basic definition of intentional intervention we will provide later, we limit ourselves to operators on a single variable (for more general definitions see \Cref{app:notation}) :

\begin{mydef}[(Perfect) Intervention \citep{pearl2009causality}]
Given an SCM $\scm$, a variable ${X_i} \in \envars$ and value ${x}$, an intervention $\dointv{{X_i}}{{x}}$ is an operator applied to $\scm$ generating a new post-intervention SCM $\scmu{\doopt} = \myset{\envars,\exvars,\structfuncsu{\doopt},\probdists}$ where $\structfuncsu{\doopt} = \structfuncs$ except for $f_i$ being replaced by the constant $x$.  
\end{mydef}

An intervention has both an algebraic aspect (shown in the definition as the substitution of a structural function with a constant), and a graphical aspect (due to the fact that $\scmu{\doopt}$ induces a new DAG where the edges incoming into intervened node ${X}$ are removed, as in \Cref{fig:SCM-graphs}(b)). 
A perfect intervention can be generalized to a mechanism change:

\begin{mydef}[Mechanism Change \citep{tian2013causal}]
Given an SCM $\scm$, a variable ${X_i} \in \envars$ and a function ${g}$, a mechanism change $\dointv{{X_i}}{{g}}$ is an operator applied to $\scm$ generating a new post-intervention SCM $\scmu{\doopt} = \myset{\envars,\exvars,\structfuncsu{\doopt},\probdists}$ where $\structfuncsu{\doopt} = \structfuncs$ except for $f_i$ being replaced by the function $g$.
\end{mydef}

A well-behaved mechanism change preserves or reduces the signature of the intervened function, thus inducing a new model $\scmu{\doopt}$ according to an analogous 
procedure as for perfect interventions.
Finally, SCMs allow also for the evaluation of counterfactual statements:

\begin{mydef}[Counterfactual \citep{pearl2009causality}]
Given an SCM $\scm$, a variable ${X} \in \envars$ and values ${x,x^*}$ for ${X}$, a counterfactual $\ctf{{X}}{{x}}{{X}}{{x^*}}$ is an operator applied to $\scm$ and generating a new counterfactual SCM $\ctfscm = \ctfscmsignature$ via a two-step procedure:
\begin{enumerate}
    \item \emph{Abduction:} infer the distributions $\probdistsu{\vert {X}={x}}$ of the exogenous variables $\exvars$ in $\scm$ under the conditioning ${X}={x}$, and define a new SCM $\scmu{\abd} = \myset{\envars,\exvars,\structfuncs,\probdistsu{\vert {X}={x}}}$.
    
    \item \emph{Intervention}: perform intervention $\dointv{{X}}{{x^*}}$ on $\scmu{\abd}$, generating the SCM $\ctfscm$.
\end{enumerate}

\end{mydef}

Besides the algorithmic procedure, the model induced by a counterfactual can also be given a graphical representation. For the sake of simplicity, we consider \emph{twin graphs} \citep{balke2022probabilistic} generated through a twinning operation \citep{bongers2021foundations}:

\begin{mydef}[Twin model]\label{def:twin_model}
    Given an SCM $\scm$ and counterfactual $\ctf{{X}}{{x}}{{X}}{{x^*}}$, a twin model is a SCM $\scmu{\twin}$ constructed via a two-step procedure:
    
    \begin{enumerate}
        \item \emph{Replica:} infer the distribution $\probdistsu{\vert {X}={x}}$ in $\scm$ under the conditioning ${X}={x}$ and define a new model $\scmu{\rep}=\myset{\envars \cup \envars^*, \exvars, \structfuncs\cup\structfuncs^*, \probdistsu{\vert {X}={x}}}$   where $\envars^*, \structfuncs^*$ replicates $\envars, \structfuncs$.

        \item \emph{Intervention:} perform intervention $\dointv{{X}^*}{{x^*}}$ on $\scmu{\rep}$, generating the SCM $\scmu{\twin}$.
    \end{enumerate}
       
\end{mydef}

Intuitively, a twin model for $\ctf{{X}}{{x}}{{X}}{{x^*}}$ is constructed following the definition of counterfactuals: it joins $\scmdagu{}$ and its replica on the same exogenous variables $\mathcal{U}$ while distinguishing actual variables $\mathcal{X}$ and counterfactual variables $\mathcal{X}^*$, as illustrated in \Cref{fig:SCM-graphs}(c). 

\section{Desiderata for intention modeling} \label{sec:Desiderata}

In this section we express our desiderata for modeling  the behavior of an intentional agent (see \Cref{app:desiderata} for a more thorough justification):

\begin{itemize}
    \item[(D1)] \rebuttal{\textit{Causal grounding}: intentional interventions should be represented with an operator relating an unintervened causal model and an intervened one, so that the unintervened model (i) can be described and tested causally, independently of any agent, and (ii) can support intentional interventions upon different variables and with different aims, so discriminating between observational and interventional quantities;}



    \item[(D2)] \emph{State-awareness}: it must be explicit that the agent's behavior is derived from an awareness of some state expressed within the model;

    \item[(D3)] \emph{Goal-directedness}: it must be explicit that the agent's behavior is aimed at achieving a desired state expressed within the model;

    \item[(D4)] \textit{Explaining data}:
    models should help us make sense of data which are otherwise unexplained by a given causal model,
    as in \Cref{ex:heating_causal};

    \item[(D5)] \emph{Discriminating intentions}: models should help us isolate the intended effects of an agent among all the effects obtained by an intervention, as in \Cref{ex:smoking_causal};

    \item[(D6)] \emph{Acyclicity}: models should agree with the acyclicity assumption of standard SCMs; 

    \item[(D7)] \emph{Time-agnostic}: models should allow us to work with data with no explicit time dimension, as in \Cref{ex:heating_causal,ex:smoking_causal};
        
\end{itemize}

\section{Previous approaches to intention modeling}

In this section we review previous approaches from the literature to modeling agents with SCMs in light of the desiderata listed above (see \Cref{tab:desiderata} for a summary). 

\begin{table*}[]
    \centering
    \begin{tabular}{cccccc}
\hline 
\textbf{\emph{\footnotesize{}Desiderata}} & 
\textbf{\emph{\footnotesize{}Exogenous}} & \textbf{\emph{\footnotesize{}Endogenous}} & \textbf{\emph{\footnotesize{}Time-based}} & \textbf{\emph{\footnotesize{}Final}}\tabularnewline
\hline 
{\footnotesize{}(D1) Causal grounding} & 
{\footnotesize{}$\times$} & {\footnotesize{}$\times$} & {\footnotesize{}$\times$}& {\footnotesize{}$\checkmark$} & \tabularnewline
\hline 
{\footnotesize{}(D2) State-awareness} & 
{\footnotesize{}$\times$} & {\footnotesize{}$\checkmark$} & {\footnotesize{}$\checkmark$} & {\footnotesize{}$\checkmark$} & \tabularnewline
\hline 
{\footnotesize{}(D3) Goal-directedness} & 
{\footnotesize{}$\times$} & {\footnotesize{}$\checkmark$} & {\footnotesize{}$\checkmark$} & {\footnotesize{}$\checkmark$}& \tabularnewline
\hline 
{\footnotesize{}(D4) Explaining data} & 
{\footnotesize{}$\times$} & {\footnotesize{}$\checkmark$} & {\footnotesize{}$\checkmark$} & {\footnotesize{}$\checkmark$}& \tabularnewline
\hline 
{\footnotesize{}(D5) Discriminating intentions} & 
{\footnotesize{}$\times$} & {\footnotesize{}$\checkmark$} & {\footnotesize{}$\checkmark$} & {\footnotesize{}$\checkmark$}& \tabularnewline
\hline 
{\footnotesize{}(D6) Acyclicity} & 
{\footnotesize{}$\checkmark$} & {\footnotesize{}$\times$} & {\footnotesize{}$\checkmark$} & {\footnotesize{}$\checkmark$}& \tabularnewline
\hline 
{\footnotesize{}(D7) Time-agnostic} & 
{\footnotesize{}$\checkmark$} & {\footnotesize{}$\checkmark$} & {\footnotesize{}$\times$} & {\footnotesize{}$\checkmark$}& \tabularnewline
\hline 
\end{tabular}
    \caption{List of desiderata for intention modeling}
    \label{tab:desiderata}
\end{table*}

\narrative{Causal modeling of action.}
An SCM represents a \emph{system in isolation} with a set of deterministic endogenous variables as well as a set of latent stochastic exogenous variables. This corresponds to a level of abstraction at which \dd{nature can be carved at its joints}, meaning that we establish a boundary such that the set of endogenous variables behaves as a closed mechanistic system, while the exogenous variables amount to stochastic processes independent from the system under study.
However, an intentional agent breaks this partitioning, as it is not part of the modeled system but still intervenes on it 
\citep{vonwright1971}. In order to preserve the setup of systems in isolation, agents can be factored within or without an SCM.

\paragraph{Factoring intention into exogenous variables.}
Factoring an agent outside an SCM means relegating it into the exogenous variables $\exvars$; probability distributions $\probdists$ over $\exvars$ may then approximate the intentions of an agent, as in the following example. 

\begin{example} \label{ex:exogenous_modelling}
Reconsider the heating scenario of \Cref{ex:heating_causal} and suppose we were to know that agents intentionally turn on the heater ($H$) when the weather ($W$) is bad. We could then set the heater exogenous variable ($U_H$) to match the frequency of bad weather, in an attempt to reproduce observed data, as done in \Cref{fig:basic-solutions-heating-system}(a).   
\end{example}

This solution has severe limitations. First, the chosen exogenous distributions may happen to bias the SCM towards outcomes resembling the behavior of a system controlled by an agent, but it does not explicitly model intentional interventions (D1). State-awareness and goal-directedness are absent (D2, D3). The model fails to explain explicitly the dependencies between variables induced by the agent (D4) and we are prevented from reasoning about the agents’ intentions (D5).
While the former points might be addressed by considering semi-Markovian models \citep{pearl2009causality}, the latter depends on the nature of exogenous variables.
Thus, only properties intrinsic to SCMs (D6, D7) are retained.

\begin{figure*}
    \centering
    \begin{tikzpicture}
        [node distance = 1cm, thick,
        arrow/.style = {draw, ->},
        circ/.style = {draw, ellipse}]

        \begin{scope}[xshift=0]
            \node[circ, text=black, draw=black, fill=white] at (-1.2, 0.5) (W) {$W$};
            \node[circ, text=black, draw=black, fill=white] at (0, 0) (T) {$T$};
            \node[circ, text=black, draw=black, fill=white] at (1.2, 0.5) (H) {$H$};
            \node[circ, text=black, draw=black, fill=white, dashed] at (1.2, 1.8) (UH) {$U_H$};
    
            \draw[arrow] (W) edge [bend left=0] (T);
            \draw[arrow] (H) edge [bend left=0] (T);
            \draw[arrow] (UH) edge [bend left=0] (H);

            \node at (0,-1.0) {(a)};
        \end{scope}

        \begin{scope}[xshift=120]
            \node[circ, text=black, draw=black, fill=white] at (-1.2, 0.5) (W) {$W$};
            \node[circ, text=black, draw=black, fill=white] at (0, 0) (T) {$T$};
            \node[circ, text=black, draw=black, fill=white] at (1.2, 0.5) (H) {$H$};
            \node[circ, text=black, draw=black, fill=white] at (0, 1.8) (I) {$I$};
    
            \draw[arrow] (W) edge [bend left=0] (T);
            \draw[arrow] (H) edge [bend left=0] (T);
            \draw[arrow] (I) edge [bend left=0] (H);
            \draw[arrow] (T) edge [bend left=0] (I);

            \node at (0,-1.0) {(b)};
        \end{scope}

        \begin{scope}[xshift=240]
            \node[circ, text=black, draw=black, fill=white] at (-1.2, 0.5) (W0) {$W_0$};
            \node[circ, text=black, draw=black, fill=white] at (0, 0) (T0) {$T_0$};
            \node[circ, text=black, draw=black, fill=white] at (1.2, 0.5) (H0) {$H_0$};

            \node[circ, text=black, draw=black, fill=white] at (2.8, 0.5) (W1) {$W_1$};
            \node[circ, text=black, draw=black, fill=white] at (4, 0) (T1) {$T_1$};
            \node[circ, text=black, draw=black, fill=white] at (5.2, 0.5) (H1) {$H_1$};
            
            \node[circ, text=black, draw=black, fill=white] at (4, 1.8) (I_1) {$I_1$};
    
            \draw[arrow] (W0) edge [bend left=0] (T0);
            \draw[arrow] (H0) edge [bend left=0] (T0);

            \draw[arrow] (W1) edge [bend left=0] (T1);
            \draw[arrow] (H1) edge [bend left=0] (T1);
            
            \draw[arrow] (W0) edge [bend left=30] (W1);
            \draw[arrow] (H0) edge [bend left=30] (H1);
            \draw[arrow] (T0) edge [bend left=0] (T1);
            
            \draw[arrow] (I_1) edge [bend left=15] (H1);
            \draw[arrow] (T0) edge [bend left=30] (I_1);

            \node at (2,-1.0) {(c)};
        \end{scope}
        
        \end{tikzpicture}
    \caption{Existing approaches to account for agents' intentions: (a) factoring into exogenous variables; (b) factoring into endogenous variables; (c) time-based models.}
    \label{fig:basic-solutions-heating-system}
\end{figure*}
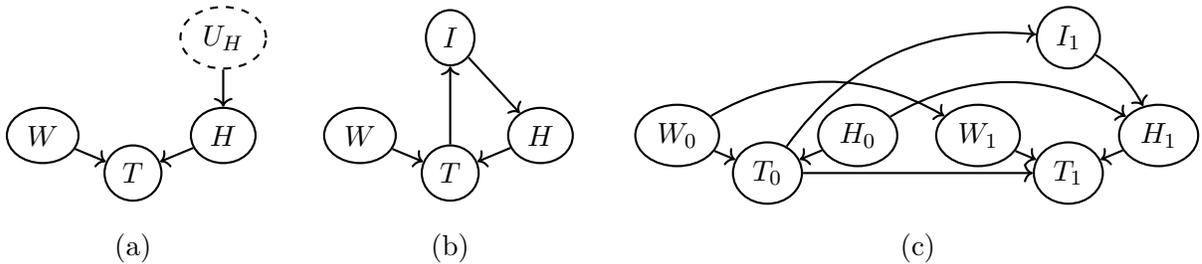

\paragraph{Factoring intention into endogenous variables.}
Accounting for state-awareness and goal-directedness (D2, D3) can be achieved with the more common approach of factoring the agents' intentions. into a new endogenous variable $I\in\envars$, as in the following example.

\begin{example} \label{ex:endogenous_modelling}
In the heating scenario of \Cref{ex:heating_causal} suppose that agents assess the temperature of the room ($T$) and then decide whether to turn on the heater ($H$). We could then introduce a new variable ($I$) with such incoming and outgoing edges, as in \Cref{fig:basic-solutions-heating-system}(b).
\end{example}

Beyond relying on standard endogenous variables, intentional action might also be captured through a decision-theoretic model \citep{dawid2021decision} by extending the signature of an SCM to account for new types of variables, such as decision and utility \citep{ward2024reasons}.
These solutions allow for explaining new dependencies in the data introduced by the agent's interventions (D4) and for discriminating intentions (D5). However, the introduction of a variable for the intention, linking back some effects to the intervened variable, breaks the acyclicity assumption (D6) as in \Cref{fig:basic-solutions-heating-system}(b), opening the way to backward causation \citep{faye2025}. Moreover, an agent (pertaining to a different level of abstraction) is reduced to a mechanism; thus agent's interventions can not distinguished from autonomous causal relations (D1), erasing any difference in interpretation between observational quantities (arising from the system in the absence of any agent) and interventional quantities (arising from a system whose dynamics have been affected by an agent).

\paragraph{Factoring agents into the time dimension.}
The cyclicity shortcoming (D6) can be solved by explicitly introducing a time dimension that specifies how past effects inform current actions, as in the following example.

\begin{example}\label{ex:time_modelling}
In the heating scenario of \Cref{ex:heating_causal} we could model the intention of the agent at the current time ($I_1$), based on \dd{listening to} the past room temperature ($T_0$) in order to act on the heater at the current time ($H_1$), as in \Cref{fig:basic-solutions-heating-system}(c).
\end{example}

Extended summary graphs for time-based causal models \citep{assaad2023identifiability} or causal representations of Markov decision processes in reinforcement learning \citep{buesing2018woulda,zeng2024survey} allow for the modeling of intentional relations (D2, D3, D4, D5) at the cost of introducing new variables and observing the system at multiple time-steps (D7). The action of an agent can be hard-coded in the form of fixed policies or dynamic treatments within the agent variables \citep{murphy2003optimal}, although this will not provide a flexible modeling capable to account for different actions performed on the same causal system (D1), again erasing the distinction between observational and interventional quantities.

\section{A new approach to intention modeling}

\narrative{Outline of our approach.} In this section, we propose a new modeling approach that satisfies all our desiderata. We will introduce some key assumptions for intentional action, formalize the concepts of intentional intervention and SFM, and discuss some of their properties.

\subsection{Assumptions for intention modeling}\label{ssec:assumptions}

In order to act intentionally, the agent must (i) have knowledge of the causal system it is interacting with, and (ii) intervene without the state of the world significantly changing.

\begin{assumption}[Perfect knowledge]\label{ass:perfect-knowledge}
    The intentional agent knows the true causal structure (DAG) of the causal system it is interacting with.
\end{assumption}

\narrative{Meaning of the assumptions: Perfect knowledge}
\Cref{ass:perfect-knowledge} can be easily justified for interventions upon simple causal systems, such as \Cref{ex:heating_causal}, where cognitive and cultural variation does not have a significant impact on the agents' behavior. If we were to study interventions upon more complex systems or those whose functioning is open to controversy, this assumption would have to be relaxed, taking into account the epistemic state of the agent \citep{friedenberg2025intents}. \rebuttal{Dropping this assumption would then allow us to model scenarios where the agent might misperceive the environment or misspecify its causal model (see \Cref{app:definition})}.

\begin{assumption}[High-frequency agent]\label{ass:high-frequency}
    The intentional agent acts at a significantly higher frequency than the exogenous variables.
\end{assumption}

\narrative{Meaning of the assumptions: High frequency}
\Cref{ass:high-frequency} reflects the case in \Cref{ex:heating_causal}, where the weather does not change significantly before and after intervention. If this assumption were not satisfied, the exogenous variables would change quickly enough that, by the time the agent manages to turn on the heater, room temperature could already be warm.
This assumption provides a measure of stability necessary for state-awareness and for actions to be purposeful. 
Notice, however, that the assumption needs to hold only at a chosen level of abstraction at least approximately: it might not be true that the exogenous variables are exactly fixed, but should at least be stable enough to be treated as constant. 
Violation of this requirement would lead to time-based models, with exogenous variables resampled at each timestep.

\subsection{Intentional interventions and structural final models}\label{ssec:intentional_interventions}

\narrative{Intuition for our approach.}
From a teleological perspective, \rebuttal{under \Cref{ass:perfect-knowledge}}, what determines the value of an intervened variable are not its \textit{factual ancestors} (what its causes are) but its \textit{counterfactual descendants} (what the outcome would be without intervention).
Intentional interventions thus bear a resemblance both to standard interventions and to counterfactuals:
\begin{itemize}
    \item[(i)] In standard interventions, intervened variables are assigned arbitrary values and thus freed from all their parents, if any. In intentional interventions, intervened variables similarly lose incoming arrows because of the control exercised on them by the agent, but also gain new incoming arrows encoding the agent's intention;
    \item[(ii)] In counterfactuals, an observed factual SCM is twinned with an imagined counterfactual SCM. In intentional interventions, a counterfactual unintervened SCM is twinned with a factual intervened SCM.
\end{itemize}
Let us formalize the application of an intentional intervention as an operator generating a SFM, restricting our attention to an intentional intervention on a single variable ($X$) and having another single variable as goal ($Y$):

\begin{mydef}[Intentional Intervention and Structural Final Model] \label{def:intentional_intervention}
Given an SCM $\scm$, an intervention variable $X\in \envars$, a variable $Y \in \envars$ such that $Y \in \desc(X)$, and a function $f(y)$, an \emph{intentional intervention} $\iiintv{X^\star}{{f}(y)}$ is an operator applied to $\scm$ and generating a new SCM, \rebuttal{denoted as \emph{structural final model (SFM)}} $\finscm$, constructed via a two-step procedure:

    \begin{enumerate}
        \item \emph{Replica:} infer the distribution ${\probdistsu{}}_{\vert Y=y}$ in $\scm$ under the conditioning $Y=y$ and define a new model $\scmu{\rep} = \myset{\envars \cup \envars^\star, \exvars, \structfuncs\cup\structfuncs^\star, {\probdistsu{}}_{\vert Y=y}}$ where $\envars^\star, \structfuncs^\star$ replicate $\envars,\structfuncs$.

        \item \emph{Intervention:} perform intervention $\dointv{X^\star}{f(y)}$ on $\scmu{\rep}$, generating SCM $\finscm$.
    \end{enumerate}
\end{mydef}


%
%

\narrative{Interpretation}
The first step \rebuttal{operates analogously to the first step of \Cref{def:twin_model} by twinning} two models, the original causal $\scm$ and the replica $\scmu{rep}$ which the agent intervenes upon; the second step performs the actual intervention which 
is encoded by the mechanism change $f(y)$ as in \Cref{fig:SCM-graphs}(d). Thus, SFMs avoid representing intentions as model variables and instead express them through a \textit{relationship} among two causal models.

Notice that SFMs assume that what is factual, and observed by researchers, is the result of an intervention, while what is counterfactual is the grounding causal system without any intervention. Graphically, in the case of counterfactuals as in \Cref{fig:SCM-graphs}(c), given the twin model $\scmu{\twin}$, it is assumed that an investigator observes data from $\envars$ (factual black variables) while $\envars^*$ are unobservable (counterfactual gray variables). In the case of intentional interventions as in \Cref{fig:SCM-graphs}(d), instead, given the model $\finscm$, an investigator observes data from $\envars^\star$ (intentionally intervened factual black variables) but not from $\envars$ (unintervened counterfactual gray variables).
\rebuttal{This inversion might be interpreted as the act of an agent that tries to \emph{anticipate the functions of a system} by counterfactually simulating their evolution and then acting to steer they dynamics towards its desired outcome.}

\narrative{Role of assumptions}
\Cref{ass:perfect-knowledge} guarantees that the intentions of the agent are consistent with the workings of the causal system, that is, the mechanism change $f$ can be purposefully used to obtain the intended outcome on $Y^\star$. \Cref{ass:high-frequency} justifies the twinning step. 
Importantly, notice that we rely on a deterministic function $f$, but we do not require it to be optimal or the agent to be moral or rational (\emph{contra} \cite{halpernkleimanweiner2018}), allowing us to capture the behavior of agents with limited computational capacity, affected by biases, or who are different from us (see \Cref{app:definition}).

\subsection{Properties of SFMs and intentional interventions}

We now prove a few properties of SFMs; formal proofs are in \Cref{app:proofs}.

\begin{myprop}\label{prop:DAG}
    SFMs imply DAGs.
\end{myprop}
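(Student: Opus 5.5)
The plan is to read off the graph induced by $\finscm$ from the two-step construction of \Cref{def:intentional_intervention} and show that it has no directed cycle. I would do this by exhibiting an explicit topological order.

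First I determine the edge set. After the replica step, $\scmu{\rep}$ has endogenous variables $\envars \cup \envars^\star$ and structural functions $\structfuncs \cup \structfuncs^\star$. Each $f_i$ takes arguments only from $\envars$, and each replicated $f_i^\star$ takes arguments only from $\envars^\star$, with the same pattern. So the graph of $\scmu{\rep}$ is the disjoint union of $\scmdag$ and an isomorphic copy $\scmdag^\star$. By the standing assumption, both are acyclic. Conditioning on $Y=y$ only changes the exogenous distribution $\probdistsu{}_{\vert Y=y}$, not the functions, so it adds no edges.

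Next I handle the intervention step. The mechanism $f_X^\star$ is replaced by $f(y)$, whose only endogenous argument is the unintervened variable $Y\in\envars$ (reading $f(y)$ as $f(Y)$, as in \Cref{fig:SCM-graphs}(d)). This step therefore removes all incoming edges of $X^\star$ within $\scmdag^\star$ and adds the single edge $(Y, X^\star)$. The resulting edge set consists of the edges of $\scmdag$, the edges of $\scmdag^\star$ minus those entering $X^\star$, and the one cross edge $Y \to X^\star$.

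For acyclicity, I take a topological order of $\scmdag$ and append a topological order of $\scmdag^\star$. Every edge inside either block respects the order, since deleting edges preserves this. The only cross edge, $Y\to X^\star$, goes from the first block to the second. Hence every edge points forward and there is no directed cycle. Equivalently, any cycle would have to use an edge from $\envars^\star$ back into $\envars$, and no such edge exists. The condition $Y\in\desc(X)$ is not needed for acyclicity.

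The main obstacle is interpretive rather than technical: the argument relies on reading $f(y)$ as depending only on the unintervened $Y$, not on $Y^\star$. If $f$ were allowed to depend on $Y^\star$, the edge $Y^\star\to X^\star$ together with the path $X^\star \to \cdots \to Y^\star$ (which exists because $Y\in\desc(X)$) would form a cycle. I would state this reading explicitly, citing the construction and \Cref{fig:SCM-graphs}(d). I would also note that the same argument extends to any mechanism change on $\envars^\star$ whose new arguments lie in $\envars$.
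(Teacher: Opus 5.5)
Your proof is correct and follows the paper's own argument: each copy is acyclic, and the only cross edge, $Y \to X^\star$, runs from $\envars$ into $\envars^\star$, so no cycle can return. Your topological-order formulation is a repackaging of the paper's case analysis on where a putative cycle starts. It is slightly more careful than the paper in two explicit points: removing the incoming edges of $X^\star$ preserves acyclicity of the replica, and the argument depends on $f$ reading $Y$ rather than $Y^\star$.
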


The relation with standard interventions and counterfactuals hinted in \Cref{ssec:intentional_interventions} can now be formalized as follows:

\begin{mylemma}\label{lem:counterfactual}
    A counterfactual $\ctf{X}{x}{X}{x^*}$ is an intentional intervention $\iiintv{X^\star}{f(y)}$ for which: (i) \Cref{ass:high-frequency} holds perfectly; (ii) $Y=X$; (iii) $f(y)=x^*$.
\end{mylemma}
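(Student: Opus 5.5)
The plan is to instantiate \Cref{def:intentional_intervention} with the three restrictions of the statement and to compare the resulting SFM, step by step, with the twin model of \Cref{def:twin_model} built for $\ctf{X}{x}{X}{x^*}$. Both constructions have the same two-step shape: a replica step followed by a perfect intervention on the replicated copy. So it is enough to show that each step produces the same object, up to renaming $\envars^\star$ as $\envars^*$.

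\emph{Replica step.} Setting $Y=X$, the conditioning in the replica step of \Cref{def:intentional_intervention} becomes conditioning on $X=x$, where $x$ is the observed value of the goal variable. This yields $\scmu{\rep}=\myset{\envars\cup\envars^\star,\exvars,\structfuncs\cup\structfuncs^\star,\probdistsu{\vert X=x}}$, which is exactly the replica model of \Cref{def:twin_model}. Two points need checking here.
\begin{itemize}
    \item The definition requires $Y\in\desc(X)$. With $Y=X$ this holds only under the usual convention that a node counts among its own descendants, as in \Cref{app:notation}; I will state that convention explicitly.
    \item Both copies must share the same exogenous variables $\exvars$, with the same posterior distribution. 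In the SFM this sharing is justified only by \Cref{ass:high-frequency}. Requiring that assumption to hold perfectly (condition (i)) means the exogenous variables are exactly, not approximately, unchanged between the unintervened and the intervened copy. That is precisely the cross-world sharing of $\exvars$ assumed in the twin construction.
\end{itemize}

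\emph{Intervention step.} With $f(y)=x^*$ a constant function, the mechanism change $\dointv{X^\star}{f(y)}$ reduces to the perfect intervention $\dointv{X^\star}{x^*}$. This removes every incoming edge to $X^\star$, including the edge from the goal variable, which is the edge from $X$ since $Y=X$. The resulting model then coincides with the intervention step of \Cref{def:twin_model}. Its graph is the twin graph of \Cref{fig:SCM-graphs}(c), and it is a DAG, consistent with \Cref{prop:DAG}.

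\emph{Main obstacle: the role reversal.} In the twin model the unstarred copy is factual and the starred copy counterfactual, whereas in an SFM the starred copy is the factual, intervened one. I will argue that this is an interpretive labelling that does not affect the mathematical object: the same tuple of variables, functions and distributions is obtained either way. The proof therefore establishes equality of the generated SCMs, and ends by remarking that the counterfactual reading is recovered by regarding the agent's action $x^*$ as imagined rather than performed.
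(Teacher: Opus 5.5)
Your proposal is correct and follows the paper's proof essentially step for step. The replica steps coincide because $Y=X$ (using the convention $X\in\desc(X)$) and condition (i) licenses the shared exogenous variables; the intervention steps coincide because a constant $f(y)=x^*$ reduces $\iiintv{X^\star}{f(y)}$ to $\dointv{X^\star}{x^*}$; and treating the factual/counterfactual role reversal as a matter of interpretation matches the remark the paper appends after its proof.
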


A counterfactual is a limit-case intentional intervention where the agent has state-awareness and goal-directedness about a single variable, and acts directly on it. Since perfect interventions can be expressed with counterfactual notation, it is immediate to show that perfect interventions too can be subsumed by intentional interventions: 

\begin{mylemma}\label{lem:intervention}
    A perfect intervention $\dointv{X}{x}$ is an intentional intervention $\iiintv{X^\star}{f(y)}$ for which: (i) \Cref{ass:high-frequency} does not necessarily hold; (ii) $Y=\emptyset$; (iii) $f(y)=x$.
\end{mylemma}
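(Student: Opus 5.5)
The plan is to unfold \Cref{def:intentional_intervention} under the three stated specializations and check that the resulting model coincides with the post-intervention SCM $\scmu{\doopt}$ of $\dointv{X}{x}$. The most economical route goes through \Cref{lem:counterfactual}: a perfect intervention can be written in counterfactual notation, and that lemma already identifies counterfactuals with intentional interventions. What remains is to see what happens to the extra hypotheses (i) and (ii) there once the conditioning set is empty. I would nevertheless carry out the direct verification step by step, since it is short.

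\emph{Replica step.} With $Y=\emptyset$, the conditioning ${\probdistsu{}}_{\vert Y=y}$ is conditioning on the empty event, so it returns the prior $\probdists$. Hence $\scmu{\rep}=\myset{\envars\cup\envars^\star,\exvars,\structfuncs\cup\structfuncs^\star,\probdists}$. This is also why (i) can be dropped. \Cref{ass:high-frequency} is only used to justify that the exogenous values abducted from the factual observation $Y=y$ remain the ones acting on the replica. When nothing is observed, no abduction takes place, so resampling the exogenous variables in the replica leaves its distribution unchanged; consistency of the construction does not depend on this assumption.

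\emph{Intervention step.} With $f(y)=x$, a constant, the mechanism change $\dointv{X^\star}{f(y)}$ is the perfect intervention $\dointv{X^\star}{x}$. In particular no new edge from $\envars$ to $\envars^\star$ is created, because $f$ takes no argument. The resulting $\finscm$ then decomposes into two components that share only $\exvars$. One is the original $\scm$ over $\envars$; the other, over $\envars^\star$, is a copy of $\scm$ in which $f_X$ is replaced by the constant $x$, which is exactly $\scmu{\doopt}$ up to renaming $\envars^\star\mapsto\envars$. Since the investigator observes only $\envars^\star$, and the exogenous distribution is the unconditioned $\probdists$, the marginal of $\finscm$ on $\envars^\star$ equals the distribution of $\scmu{\doopt}$, and the graphs agree as well (incoming edges of $X^\star$ removed). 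This establishes the claim.

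The main obstacle is mostly conceptual. First, one must make precise the convention that ``$Y=\emptyset$'' means trivial conditioning, which the definition nominally excludes since it requires $Y\in\desc(X)$. This has to be treated as a degenerate extension of the definition. Second, one must argue that the unobserved gray copy $\envars$ is harmless, i.e.\ that ``is'' in the statement means equality of the model restricted to the factual variables $\envars^\star$. I would state this restriction explicitly and then appeal to the independence of the two components given $\exvars$.
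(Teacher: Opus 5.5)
Your proof is correct and is essentially the paper's argument. Trivial conditioning leaves $\probdists$ unchanged, so no abduction occurs and \Cref{ass:high-frequency} is not needed, and the constant $f$ reduces the mechanism change to $\dointv{X^\star}{x}$. The only difference is the comparison target. The paper matches these two steps against the twin construction for $\ctfinv{X}{x}$, whereas you compare directly with $\scmu{\doopt}$ by marginalizing onto $\envars^\star$, which makes explicit an identification the paper leaves implicit.

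Your remark that $Y=\emptyset$ is a degenerate extension of the definition (which requires $Y\in\desc(X)$) is fair. One small correction: conditioning on an empty set of variables means conditioning on the sure event, not on the ``empty event''.
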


Thus, a perfect intervention is also a limit-case intentional intervention where the agent has no state-awareness nor goal-directedness, and acts arbitrarily.
In \Cref{app:proofs} (\Cref{lem:lemma3}) we offer a similar result for dynamic treatments \citep{murphy2003optimal}, \rebuttal{further highlighting the difference between our approach and time-based causal models.}

\section{Teleological inference}

SFMs allow us to formalize teleological questions as \textit{teleological inferences}. \Cref{ex:heating_causal} and \ref{ex:smoking_causal} provide the illustration of two basic graphical structures (a collider and a fork) which are related to two fundamental tasks of teleological inference: (i) detecting agents and (ii) discovering intentions. In \Cref{app:simulations} we provide an implementation and simulations of teleological inference for \Cref{ex:heating_causal} and \ref{ex:smoking_causal}.

\begin{figure}
    \centering
    \begin{tikzpicture}
        [node distance = 1cm, thick,
        arrow/.style = {draw, ->},
        circ/.style = {draw, ellipse}]

        \begin{scope}[xshift=0]
            \node[circ, text=gray, draw=gray, fill=white] at (-1.2, 0.5) (H) {$H$};
            \node[circ, text=gray, draw=gray, fill=white] at (0, 0) (T) {$T$};
            \node[circ, text=gray, draw=gray, fill=white] at (1.2, 0.5) (W) {$W$};
            
            \node[circ, text=black, draw=black, fill=white, dashed] at (1.2, 1.6) (UW) {$U_W$};
            \node[circ, text=black, draw=black, fill=white, dashed] at (0, -1.1) (UT) {$U_T$};
            \node[circ, text=black, draw=black, fill=white, dashed] at (-1.2, 1.6) (UH) {$U_H$};
    
            \node[circ, text=black, draw=black, fill=white] at (3, 0.5) (H') {$H^\star$};
            \node[circ, text=black, draw=black, fill=white] at (4.2, 0) (T') {$T^\star$};
            \node[circ, text=black, draw=black, fill=white] at (5.4, 0.5) (W') {$W^\star$};        
            
            \draw[arrow] (W) edge [bend left=0, gray] (T);
            \draw[arrow] (H) edge [bend left=0, gray] (T);
    
            \draw[arrow] (W') edge [bend left=0] (T');
            \draw[arrow] (H') edge [bend left=0] (T');
            
            \draw[arrow] (UW) edge [bend left=0, gray] (W);
            \draw[arrow] (UH) edge [bend left=0, gray] (H);
            \draw[arrow] (UT) edge [bend left=0, gray] (T);
            
            \draw[arrow] (UW) edge [bend left=10] (W');
            \draw[arrow] (UT) edge [bend left=-10] (T');
            
            \draw[arrow] (T) edge [bend left=-10] (H');

            \node at (2,-2.0) {(a)};
        \end{scope}

        \begin{scope}[xshift=240, yshift=15]
            \node[circ, text=gray, draw=gray, fill=white] at (-1.2, -0.5) (D) {$D$};
            \node[circ, text=gray, draw=gray, fill=white] at (0, 0) (S) {$S$};
            \node[circ, text=gray, draw=gray, fill=white] at (1.2, -0.5) (P) {$P$};
            
            \node[circ, text=black, draw=black, fill=white, dashed] at (-1.2, -1.6) (UD) {$U_D$};
            \node[circ, text=black, draw=black, fill=white, dashed] at (0, 1.1) (US) {$U_S$};
            \node[circ, text=black, draw=black, fill=white, dashed] at (1.2, -1.6) (UP) {$U_P$};
    
            \node[circ, text=black, draw=black, fill=white] at (3, -0.5) (D') {$D^\star$};
            \node[circ, text=black, draw=black, fill=white] at (4.2, 0) (S') {$S^\star$};
            \node[circ, text=black, draw=black, fill=white] at (5.4, -0.5) (P') {$P^\star$};        
            
            \draw[arrow] (S) edge [bend left=0, gray] (P);
            \draw[arrow] (S) edge [bend left=0, gray] (D);
    
            \draw[arrow] (S') edge [bend left=0] (P');
            \draw[arrow] (S') edge [bend left=0] (D');
            
            \draw[arrow] (US) edge [bend left=0, gray] (S);
            \draw[arrow] (UP) edge [bend left=0, gray] (P);
            \draw[arrow] (UD) edge [bend left=0, gray] (D);
            
            \draw[arrow] (UP) edge [bend left=-10] (P');
            \draw[arrow] (UD) edge [bend left=-5] (D');
            
            \draw[arrow] (P) edge [bend left=10] (S');

            \node at (2,-2.5) {(b)};
        \end{scope}

        \end{tikzpicture}
    \caption{(a) SFM for heating. (b) SFM for smoking.}
    \label{fig:final-systems}
\end{figure}
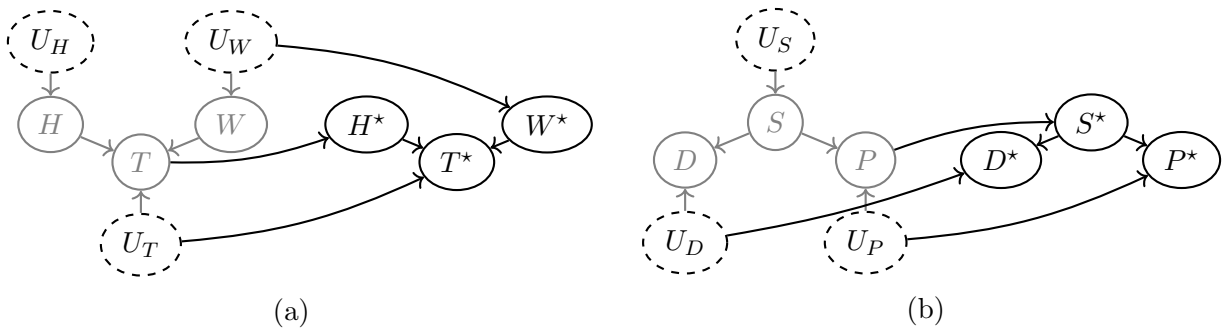

\subsection{Detecting agents}
Consider the problem of determining whether the data an investigator collects from a system is affected by the intentional intervention of an agent, as in the following example:

\begin{example}
Reconsider the heating scenario of \Cref{ex:heating_causal} and suppose we observe data as in table in \Cref{fig:basic-heating-system}(a). Can we decide whether the data comes from the unintervened causal system or instead an agent has intentionally acted on some variable?
\end{example}

If we observe from data a dependency between $W$ and $H$ which we do not expect from the graphical model we have two options: either we believe the model to be wrong, and so try to correct it by inserting a causal dependency between $W$ and $H$; or, assuming Markovianity is indeed justified, we interpret this violation of expected independencies as the sign of an intentional action.

If an agent is at work, it is \dd{controlling for a collider}, that is, observing a common effect and acting consequently \citep{compagno2025}. This means that we are not observing $W$ and $H$, but instead $W^\star$ and $H^\star$, that is, values which are the result of an intervention. The SFM in \Cref{fig:final-systems}(a) accounts for the agent's intentional intervention $\iiintv{H^\star}{{f}(t)}$ and graphically explains the generated dependency between $W^\star$ and $H^\star$ via a backdoor path across the twin models.

Importantly, the observed violation of Markovianity is not intrinsic to the causal model, as it would hold with data generated by the unintervened causal system; it is the intervention that leads to this particular violation. 
An SFM twins an unvarying causal model with the specific intentional intervention taking place on it. Different intentional interventions would tie the same underlying SCM with different twins, each accounting for specific expected independencies.
The problem of detecting agents can be generalized as:

\begin{problem}[Detecting agents]
Given a causal DAG $\scmdag$ and data $\mathcal{D}$ generated from the modeled system, is it possible to decide whether an intentional intervention happened?
\end{problem}

While graph discovery can be framed as a \emph{search problem} (find a solution for a given problem instance $P$), detecting agents can be conceived as a \emph{verification problem} (given problem instance $P$, verify if solution $s$ actually solves it).
See \Cref{app:simulations} for evidence that the intentional intervention $\iiintv{H^\star}{{f}(t)}$ can be detected from simulated data, by contrasting its expected independencies with those of the unintervened SCM.
A first identifiability result follows immediately by construction from \Cref{ex:heating_causal}:

\begin{myprop}[Identifiability for detecting agents in simple colliders]\label{prop:detecting_agents}
    Given an instance of the problem $\langle\scmdag,\mathcal{D}\rangle$ of detecting an agent in a simple collider, if the agent intervenes on one of the parents $\parents(C)$ of the collider $C$, an intentional intervention can be detected from $\mathcal{D}$ as a violation of Markovianity in $\scmdag$.
\end{myprop}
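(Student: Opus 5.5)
We argue by comparing the independence models implied by the unintervened SCM $\scm$ and by the SFM $\finscm$ obtained from an intentional intervention on a parent of the collider. Fix a simple collider $A \rightarrow C \leftarrow B$ in $\scmdag$ with $A,B$ non-adjacent and Markovian exogenous noise, and suppose the agent performs $\iiintv{A^\star}{f(c)}$, with goal $Y=C\in\desc(A)$ as required by \Cref{def:intentional_intervention}. The case of intervening on $B$ is symmetric.

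\textbf{Step 1 (independence without an agent).} In the unintervened model, $A$ and $B$ are d-separated given the empty set, since the only path $A\rightarrow C\leftarrow B$ is blocked at the unconditioned collider $C$. By Markovianity, $A \perp B$ in any distribution generated by $\scm$.

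\textbf{Step 2 (graph of the SFM).} We construct the graph of $\finscm$ as in \Cref{fig:final-systems}(a). It contains the counterfactual copy $A\rightarrow C\leftarrow B$ with exogenous parents $U_A,U_B,U_C$. It also contains the factual copy, where $B^\star$ has parent $U_B$, $C^\star$ has parents $A^\star,B^\star,U_C$, and $A^\star$ has the single parent $C$, because its mechanism was replaced by $f(c)$. By \Cref{prop:DAG} this is a DAG.

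\textbf{Step 3 (the dependence is open).} The observed variables are $A^\star$ and $B^\star$. The path $A^\star \leftarrow C \leftarrow B \leftarrow U_B \rightarrow B^\star$ contains no collider, so it is d-connecting given the empty set. Hence $A^\star \not\perp B^\star$ is permitted by the SFM's graph.

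\textbf{Step 4 (actual violation, the main obstacle).} The SFM graph only permits dependence; d-connection does not by itself guarantee it. We must show the dependence is actually realized, i.e.\ that $f$ is non-constant and $C$ genuinely depends on $B$. Our first move would be an explicit faithfulness-type assumption: $f$ is non-constant on the support of $C$, and $f_C$ is non-degenerate in $B$. This is natural, since a constant $f$ reduces to a perfect intervention by \Cref{lem:intervention} and leaves nothing to detect.

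With these assumptions, we compute $P(A^\star=a\mid B^\star=b)$ as $\sum_u P(f(f_C(a',b,u))=a\mid\cdot)$. Note that $B^\star=B$, because both copies share $U_B$ and $B$ is not intervened. It then suffices to exhibit two values $b,b'$ yielding different distributions of $f(C)$. This follows when $C$'s conditional law varies with $B$ in a way that $f$ does not collapse, as in the heating example where $f(t)=1-t$.

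\textbf{Conclusion.} The data then display $A^\star\not\perp B^\star$, contradicting the independence $A\perp B$ entailed by $\scmdag$ in Step 1. This is the claimed detectable violation of Markovianity.
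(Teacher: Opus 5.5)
Your argument is essentially the paper's: the parents $A,B$ are d-separated at the unconditioned collider in $\scmdag$, while the SFM opens the backdoor path $A^\star \leftarrow C \leftarrow B \leftarrow U_B \rightarrow B^\star$ (the paper's $X^\star \leftarrow Y \leftarrow Z \leftarrow U_Z \rightarrow Z^\star$). Your Step~4 adds the faithfulness-type condition that the paper avoids by simply positing that $\mathcal{D}$ exhibits the violation; note, though, that non-constancy of $f$ plus non-degeneracy of $f_C$ in $B$ is not enough (e.g.\ $C=A\oplus B$ with $A\sim\mathrm{Bern}(0.5)$ gives $C\perp B$), so the assumption actually doing the work is the one you end with, namely that the law of $f(C)$ given $B=b$ varies with $b$.
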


\narrative{Identifiability}
We suggest that algorithms based on evaluating violation of Markovianity might be sound, although incomplete, methods for detecting agents; violations of Markovianity are sufficient to flag the presence of an intentional intervention, but their absence can not rule out all intentional intervention. We leave to future work general identifiability criteria and the derivation of a potentially sound and complete algorithm for detecting agents.

\subsection{Intention discovery}

Assuming an intentional intervention, we might be interested in deciding what variables are \dd{listened to} by the agent, as in the following example:

\begin{example}
Reconsider the smoking scenario of \Cref{ex:smoking_causal} and suppose that data is generated from an intentional intervention. Can we know whether people smoke in order to obtain pleasure or instead \rebuttal{tar deposits}?
\end{example}

In the example above, a smoking agent might have as an aim either variable $P^\star$ or $D^\star$ (or possibly both, against \cite{davidson1963}). The fork structure in \Cref{fig:final-systems}(b) suggests a clear experimental procedure to assess intentional intervention: if the goal of the agent is only pleasure, then intervening on $P$ will affect $S^\star$ while acting on $D$ will leave $S^\star$ unmodified.
More generally, the problem of intention discovery can be expressed as:

\begin{problem}[Intention discovery]
Given the SCM $\scm$ and assuming an intentional intervention $\iiintv{X^\star}{{f}(y)}$ on $X^\star$, is it possible to identify the argument variable $Y$ via observational or experimental data?
\end{problem}

Intention discovery might be seen as an \emph{experimental design} problem supported by the use of SFMs to decide which interventions to perform in order to assess the sensitivity of the agent's function $f$ to different goals. 
See \Cref{app:simulations} for evidence that the intentional intervention $\iiintv{S^\star}{{f}(p)}$ can be discovered from simulated data, by contrasting its expected distribution with that of $\iiintv{S^\star}{{f}(d)}$.  
As before, the construction in \Cref{ex:smoking_causal} suggests an immediate protocol to falsify teleological statements:

\begin{myprop}[Identifiability for intention discovery in simple forks]\label{prop:discovering_intentions}
    Given an instance of the problem $\langle \scm,\finscm \rangle$ of intention discovery in a simple fork, the argument $y$ of the intentional intervention $\iiintv{X^\star}{{f}(y)}$ can be identified by intervening on the descendants of $X$.
\end{myprop}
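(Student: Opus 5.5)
The plan is to work directly with the graphical structure of the SFM for a simple fork, as in \Cref{fig:final-systems}(b). Fix a fork $\scm$ in which $X$ has children $Y_1, Y_2$ (the descendants of $X$), with independent exogenous $U_X, U_{Y_1}, U_{Y_2}$. The candidate intentional interventions are $\iiintv{X^\star}{f(y_1)}$ and $\iiintv{X^\star}{f(y_2)}$, possibly also $f(y_1,y_2)$. By \Cref{def:intentional_intervention}, $\finscm$ contains the unintervened copy $X \to Y_1, X \to Y_2$ and the factual copy $X^\star \to Y_1^\star, X^\star \to Y_2^\star$. In the factual copy, $X^\star$ has as its only incoming edge the one from the argument variable $Y_k$ of the unintervened copy, and $Y_j$ and $Y_j^\star$ share $U_{Y_j}$. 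By \Cref{prop:DAG} this is a DAG, so do-calculus and graphical reasoning apply.

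The key step is to design, for each descendant $Y_j$, an experiment that is a perfect intervention $\dointv{Y_j}{y}$ on the underlying causal system. Since the agent, by \Cref{ass:perfect-knowledge}, anticipates the counterfactual unintervened outcome, such an experiment acts on the counterfactual copy $Y_j$ that feeds $f$, not only on $Y_j^\star$. Equivalently, it manipulates the mechanism $U_{Y_j}$/$f_{Y_j}$ shared by both copies. In $\finscm$ this cuts $X \to Y_j$ and sets $Y_j$. I then compare the post-experiment distribution of $X^\star$ with the pre-experiment one, or compare across two settings $y \neq y'$.

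The argument has two cases.

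\emph{Case (a): $Y_j$ is an argument of $f$.} There is a directed edge $Y_j \to X^\star$. If $f$ is nontrivial in that argument, varying $y$ changes $X^\star$, so $P(X^\star \mid \dointv{Y_j}{y})$ depends on $y$.

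\emph{Case (b): $Y_j$ is not an argument of $f$.} The only parent of $X^\star$ is the argument variable $Y_k$, $k \neq j$. In the fork, $Y_k$ depends only on $X$ and $U_{Y_k}$, and the intervention on $Y_j$ touches neither; $U_{Y_j}$ is independent of them by Markovianity. So $X^\star$ has no directed path from the intervened node, and its distribution is invariant.

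Running this test on every descendant then identifies exactly the set of arguments of $f$, and hence $y$. This is the formal version of the smoking heuristic: intervening on $P$ changes $S^\star$, while intervening on $D$ leaves it unchanged.

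I expect the main obstacle to be making case (a) airtight. Identification needs a faithfulness-type condition: $f$ must genuinely vary in its argument over values reachable under the intervention, otherwise a degenerate $f$ is indistinguishable from no dependence. There is also the conditioning in the replica step, where $\probdists_{\vert Y=y}$ is used. I will need to argue that abduction on $Y$ does not create a dependence between $U_{Y_j}$ and $X^\star$ that mimics case (a). I would handle this by noting that in a simple fork the conditioning on $Y_k$ d-separates nothing relevant from $Y_j$ once $Y_j$ is intervened. I would also state the nondegeneracy of $f$ as an explicit hypothesis of the identifiability claim.
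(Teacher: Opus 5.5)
Your proposal is correct and follows essentially the same route as the paper's proof: perform atomic interventions on each child of $X$, applied to the unintervened copy the agent listens to and, via the shared mechanism, to its starred twin as well. The distribution of $X^\star$ then shifts only when the intervened child is the argument of $f$ and stays invariant for the other child. Your explicit nondegeneracy (faithfulness-type) hypothesis on $f$ is the condition the paper leaves implicit when it appeals to ``the definition of (interventional) causality''. Note that for a multi-argument $f$ it must be stronger than mere non-constancy in the argument: for instance, an XOR-type $f$ can leave the marginal of $X^\star$ unchanged.
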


\narrative{Identifiability}
The proposition hints to a simple, albeit not computationally-scalable, algorithmic approach based on the testing of all combinations of descendants. Future work will look into more general and efficient algorithms capable to discriminate between intended and unintended consequences of interventions.


\section{Conclusion}

This paper shows the importance of taking interventions as objects of modeling.
Having presented the limitations of existing approaches, we introduced a novel formalism extending standard SCMs,
complying with the core desiderata for intention modeling (see \Cref{tab:desiderata}): intentional intervention is a flexible operator, capable of generating different structural final models (SFMs) from a common causal ground model, separating observational and interventional quantities (D1); state-awareness and goal-directedness are expressed in its arguments (D2-D3); unexpected dependencies can be explained without modifying our causal ground model, and may be used for detecting agents (D4); intentions can be discriminated empirically (D5); acyclicity is proved (D6); no time dimension is needed (D7).

Our approach is based upon common assumptions in causality; 
some assumptions might be dropped in future work, for the sake of generalization: Markovian models might be extended to semi-Markovian; single-variate operators might be extended to multi-variate ones, while single target variables might be expanded to multiple targets; twin graphs might be replaced by counterfactual graphs \citep{shpitser2007counterfactuals} or ancestral multi-world network \citep{correa2025counterfactual}. The assumption of perfect causal knowledge on the side of the agent might also be discarded, opening the door to teleological reasoning in presence of model misspecification.


\bibliography{bib}

\newpage

\appendix

\section{Notation and Definitions} \label{app:notation}

In this section we first provide some justification for our notation, and then we offer a few relevant graph-theoretical definitions, and more generic definition for the causal object and operators we have dealt with in the main paper.

\subsection{Discussion on notation}

\paragraph{Intervention and counterfactual operators.}
In this paper, we adopt an explicit notation for causal operators, such as the intervention operator $\dointv{X}{x}$ or counterfactual operator $\ctf{X}{x}{X}{x^*}$. Although a counterfactual quantity does not need an explicit operator (since the clashing assignments already hint to the fact that we are operating in different worlds), we believe that an explicit notation makes the discussion and the connection to intentional interventions clearer.

\paragraph{Intentional intervention operator.} For the intentional intervention operator we have opted for a notation that stresses the connection of our operator to intervention and counterfactual. The use of the keyword \emph{$\doopt$} is meant to highlight the foundational interventional nature of the operator; however, we decorate the keyword with a star $\star$ to point out the difference from standard perfect interventions. Similarly, the use of a superscript is meant to underline a connection with counterfactuals; however, we opted for a star $\star$ instead of an asterisk $*$ to preserve the distinction.

\subsection{Graph-theoretical definitions.}

We provide here a precise definition of graph-theoretical concepts referred to in the main paper.

\begin{mydef}[Parents]
    Let $\langle \mathcal{V}, \mathcal{E} \rangle$ be a directed graph defined over vertices $\mathcal{V}$ and edges $\mathcal{E}$. Given a vertex $V_i \in \mathcal{V}$, the parent set $\parents(V_i)$ of $V_i$ is the set of all vertices $V_j \in \mathcal{V}$ such that $(V_j,V_i) \in \mathcal{E}$.
\end{mydef}

\begin{mydef}[Directed path]
    Let $\langle \mathcal{V}, \mathcal{E} \rangle$ be a directed graph defined over vertices $\mathcal{V}$ and edges $\mathcal{E}$. A directed path $p$ is an ordered set of vertices $(V_1,...,V_n)$, for $n\geq1$, $V_i \in \mathcal{V}$ and $(V_i,V_{i+1}) \in \mathcal{E}$.
\end{mydef}

\begin{mydef}[Descendants]
    Let $\langle \mathcal{V}, \mathcal{E} \rangle$ be a directed graph defined over vertices $\mathcal{V}$ and edges $\mathcal{E}$. Given a vertex $V_i \in \mathcal{V}$, the set of descendants $\desc(V_i)$ of $V_i$ is the set of all vertices $V_j \in \mathcal{V}$ such that there exists a direct path $p$ from $V_i$ to $V_j$.
\end{mydef}

Notice that by definition, the set of descendants $\desc(V_i)$ of $V_i$ includes $V_i$ itself.

\subsection{Extended definitions}

We provide here more and generic description relevant to causal modelling. We start with the definition of dynamic SCM through the introduction of a temporal dimension.

\begin{mydef}[Dynamic SCM \cite{blondel2017identifiability}] A dynamic SCM $\scm$ is a tuple $\scmsignature$ where $\envars = \setNTelems{X}$ is a set of $N$ endogenous variables over $T$ timesteps representing quantities relevant to the modeler; $\exvars = \setNTelems{U}$ is a set of $N$ over $T$ timesteps exogenous variables capturing variability outside the model; $\structfuncs = \setNelems{f}$ is a set of $N$ structural functions, one for each endogenous node, such that the value of $X_i^{(t)}$ is computed as $X_i^{(t)} = f_i(\envars^{(\leq t)}\setminus X_i^{(t)}, U_i^{(t)})$, where $\envars^{(\leq t)}$ denotes the set of all the endogenous variable at a timestep less or equal than $t$; $\probdists = \setNelems{P}$ is a set of $N$ probability distributions, one for each exogenous node, such that the value of $U_i^{(t)}$ is computed as $U_i^{(t)} \sim P_i$.
\end{mydef}

Notice that the definition of a dynamic SCM assumes a fixed collection of endogenous and exogenous variables repeating in time. It also assumes stable structural functions and probability distribution. In particular, structural functions depend only on inputs from the past; for the structural equation not to change in time, every time-step depends on a number of inputs with a fixed delay in the past.

Next, we provide the standard definitions of intervention and mechanism change generalized to sets of variables.

\begin{mydef}[(Multi-variate Perfect) Intervention \citep{pearl2009causality}]
Given an SCM $\scm$, a set of variables $\mathbf{X} \in \envars$ and a set of values $\mathbf{x}$, one for each variable in $\mathbf{X}$, an intervention $\dointv{\mathbf{X}}{\mathbf{x}}$ is an operator applied to $\scm$ generating a new post-intervention SCM $\scmu{\doopt} = \myset{\envars,\exvars,\structfuncsu{\doopt},\probdists}$ where $\structfuncsu{\doopt}$ is the set of functions $f_i^{\doopt}$ such that $f_i^{\doopt} = f_i$ if $X_i \notin \mathbf{X}$, else $f_i^{\doopt} = x_i \in \mathbf{x}$.  
\end{mydef}

\begin{mydef}[(Multi-variate) Mechanism Change \citep{tian2013causal}]
Given an SCM $\scm$, a set of variables $\mathbf{X} \in \envars$ and a set of functions $\mathbf{g}$, one for each variable in $\mathbf{X}$, a mechanism change $\dointv{\mathbf{X}}{\mathbf{g}}$ is an operator applied to $\scm$ generating a new post-intervention SCM $\scmu{\doopt} = \myset{\envars,\exvars,\structfuncsu{\doopt},\probdists}$ where $\structfuncsu{\doopt}$ is the set of functions $f_i^{\doopt}$ such that $f_i^{\doopt} = f_i$ if $X_i \notin \mathbf{X}$, else $f_i^{\doopt} = g_i \in \mathbf{g}$.  
\end{mydef}

We give a definition of dynamic treatment \citep{murphy2003optimal,blondel2017identifiability,HernanRobins2020CausalInference} or dynamic plan \citep{pearl2009causality}, which formalizes conditional interventions on dynamic SCMs.

\begin{mydef}[Dynamic Treatment]
    Given an dynamic SCM $\scm$, a set of variables $\mathbf{X} \in \envars$ and a set of functions $\mathbf{g}(\mathbf{Z})$, one for each variable $X_i^{t}$ in $\mathbf{X}$ such that there is no $Z_i^{t'} \in \mathbf{Z}$ with $t'\geq t$, a dynamic treatment $\dointv{\mathbf{X}}{\mathbf{g}(\mathbf{Z})}$ is an operator applied to $\scm$ generating a new post-intervention dynamic SCM $\scmu{\doopt} = \myset{\envars,\exvars,\structfuncsu{\doopt},\probdists}$ where $\structfuncsu{\doopt}$ is the set of functions $f_i^{\doopt}$ such that $f_i^{\doopt} = f_i$ if $X_i \notin \mathbf{X}$, else $f_i^{\doopt} = g_i(\mathbf{Z}) \in \mathbf{g}$.  
\end{mydef}

Finally, we also offer a definition of counterfactual and twin model generalized to sets of variables.

\begin{mydef}[(Multi-variate) Counterfactual \citep{pearl2009causality}]
Given an SCM $\scm$, a set of variables $\mathbf{X} \in \envars$, and sets of values $\mathbf{x,x^*}$ for variable $\mathbf{X}$, a counterfactual $\ctf{\mathbf{X}}{\mathbf{x}}{\mathbf{X}}{\mathbf{x^*}}$ is an operator applied to $\scm$ and generating a new counterfactual SCM $\ctfscm = \ctfscmsignature$ via a two-step procedure:

\begin{enumerate}
    \item \emph{Abduction:} infer the distribution $\probdistsu{\vert \mathbf{X}=\mathbf{x}}$ of the exogenous variables $\exvars$ in $\scm$ under the conditioning $\mathbf{X}=\mathbf{x}$, and define a new SCM $\scmu{\abd} = \myset{\envars,\exvars,\structfuncs,\probdistsu{\vert \mathbf{X}=\mathbf{x}}}$.

    \item \emph{Intervention}: perform intervention $\dointv{\mathbf{X}}{\mathbf{x^*}}$ on $\scmu{\abd}$, generating the SCM $\ctfscm$.
\end{enumerate}
\end{mydef}

\begin{mydef}[(Multi-variate) Twin model] Given a counterfactual $\ctf{\mathbf{X}}{\mathbf{x}}{\mathbf{X}}{\mathbf{x^*}}$, a twin model is a SCM $\scmu{\twin}$ constructed via a two-step procedure:
    
    \begin{enumerate}
        \item \emph{Replica:} infer the distribution $\probdistsu{\vert \mathbf{X}=\mathbf{x}}$ in $\scm$ under the conditioning $\mathbf{X}=\mathbf{x}$ and define a new model $\scmu{\rep}=\myset{\envars \cup \envars^*, \exvars, \structfuncs\cup\structfuncs^*, \probdistsu{\vert \mathbf{X}=\mathbf{x}}}$   where $\envars^*, \structfuncs^*$ replicates $\envars, \structfuncs$.

        \item \emph{Intervention:} perform intervention $\dointv{\mathbf{X}^*}{\mathbf{x^*}}$ on $\scmu{\rep}$, generating the SCM $\scmu{\twin}$.
    \end{enumerate}
       
\end{mydef}



\section{Desiderata and assumptions for intention modelling}\label{app:desiderata}

In this section we provide discussion on our desiderata and assumptions.

\subsection{Desiderata for intention modelling}

We first discuss more at length the desiderata that we have chosen for intention modelling in \Cref{sec:Desiderata}.

\paragraph{(D1) Causal grounding.} \emph{intentional interventions should be represented with an operator relating an unintervened causal model and an intervened one, so that the unintervened model (i) can be described and tested causally, independently of any agent, and (ii) can support intentional interventions upon different variables and with different aims, so discriminating between observational and interventional quantities.}

\rebuttal{Formally, we want an intentional operator to be an operator $\mathcal{I}$ applicable to an SCM $\mathcal{M}$ and generating a new intentionally intervened model $\mathcal{I}(\mathcal{M})$, so that:
\begin{itemize}
    \item We can reason causally on $\mathcal{M}$ independently of $\mathcal{I}$;
    \item Given multiple intentional intervention $\mathcal{I},\mathcal{I'}$ we can compute multiple intentionally intervened models $\mathcal{I}(\mathcal{M})$, $\mathcal{I'}(\mathcal{M})$.
\end{itemize}}

This desideratum has two parts. The first part guarantees the existence of a causal model independently of an agent, in the same way a causal model exists independently of traditional interventions. The second part requires an intentional intervention to generate a new causal model formally related to the unintervened model; in other words, it states that an intentionally intervened model is not a new ad-hoc model created just to explain the activity of an agent, but instead it can be related both to the unintervened model and, potentially, to other causal models generated by other intentional interventions.

\paragraph{(D2) State-awareness.} \emph{It must be explicit that the agent's behavior is derived from an awareness of some state expressed within the model.}

\rebuttal{Formally, we want an intentional operator $\mathcal{I}$ to be dependent on a set $\mathbf{Y}$ of variables in the SCM $\mathcal{M}$ so that the intentional intervention is conditional on such set: $\mathcal{I}(\mathcal{M} \vert \mathbf{Y})$.}

This desideratum requires that intentional interventions have an explicit reference to a state observed by an agent. This reflects the understanding that agents have awareness of the state of the system through sensors and use this information to choose their course of action. Absence of state-awareness would make interventions independent from the state of the system, thus making the agent unable to respond to and control the system dynamically.

\paragraph{(D3) Goal-directedness.} \emph{It must be explicit that the agent's behavior is aimed at achieving a desired state expressed within the model.}

\rebuttal{Formally, we require the existence of a configuration $\mathbf{x}$ of the SCM, such that the agent maximizes the probability of observing $\mathbf{X}=\mathbf{x}$ in $\mathcal{I}(\mathcal{M} \vert \mathbf{Y})$.}

This desideratum requires that intentional interventions have an explicit reference to the goals of agents. This reflects the understanding that intention is realized in behaviors towards a goal \citep{anscombe1957, dennett1987}. Absence of goal-directedness would make interventions independent from a goal, thus reducing the agent to a random or inconsistent operator.

\paragraph{(D4) Explaining data.} \emph{Models should help us make sense of data which are otherwise unexplained by a given causal model.}

\rebuttal{Formally, data $\mathcal{D}$ generated by the intentionally intervened model should be Markovian wrt $\mathcal{I}(\mathcal{M} \vert \mathbf{Y})$.}

This desideratum encodes the idea that spurious dependencies in the data may arise from a correct causal model in case of an intentional intervention being applied on it. Also, it implies that interventions exist in the world and do produce such dependencies without modifying permanently the causal relations connecting variables. This desideratum is linked to (D1) as it presupposes a real distinction between observational and interventional quantities.

\paragraph{(D5) Discriminating intentions.} \emph{Models should help us isolate the intended effects of an agent among all the effects obtained by an intervention.}

\rebuttal{Formally, given two goals $\mathbf{X}=\mathbf{x}$ and $\mathbf{X'}=\mathbf{x'}$, data $\mathcal{D}$ from the intentionally intervened model $\mathcal{I}(\mathcal{M} \vert \mathbf{Y})$ should help us identify the goal of the agent.}

This desideratum encodes one of the main aims of reasoning about agents, that is understanding their goals starting from observable behavior. It implies that, even when the agent knows about all the effects produced by its behavior, it only aims at a subset of the produced effect \citep{compagno2025}. Identifying this subset is a crucial task of teleological inference.

\paragraph{(D6) Acyclicity.} \emph{Models should agree with the acyclicity assumption of standard SCMs.}

\rebuttal{Formally, an intentionally intervened model $\mathcal{I}(\mathcal{M} \vert \mathbf{Y})$ admits a DAG.}

This desideratum ensures that teleological inference remains compatible with causal inference in its most standard form (without causal loops). Also, it imposes that teleological reasoning does not contradict the current understanding of causation, in which actual causes precede their effects \citep{reichenbach1956, faye2025}.

\paragraph{(D7) Time-agnostic.} \emph{Models should allow us to work with data with no explicit time dimension.}

\rebuttal{Formally, data $\mathcal{D}$ generated by an intentionally intervened model $\mathcal{I}(\mathcal{M} \vert \mathbf{Y})$ is i.i.d..}

This desideratum implies that teleological inference is not strictly about the temporal evolution of a modeled system, and does not require repeated measurements. This means that the goals of an action can be inferred instantaneously, whenever disposing of relevant data.

\subsection{Assumptions for intention modelling}

In this section we further discuss the assumptions we introduced in \Cref{ssec:assumptions}.

\setcounter{assumption}{0}
\begin{assumption}[Perfect knowledge]
    The intentional agent knows the true causal structure (DAG) of the causal system it is interacting with.
\end{assumption}

What grounds teleological inference is that actual interventions on causal systems must produce data according to causal structure. That is, interventions and their goals leave a trace in data collection, and this happens under any condition of intervention (be it finalized, inconsistent, arbitrary or even totally random). That said, the easiest way to interpret data as the result of interventions is to assume that the outcome is completely transparent to the agent, and so that the entirety of its behavior is effective for reaching its goals, without any effort lost because of mismaneuvering.

\begin{assumption}[High-frequency agent]
    The intentional agent acts at a significantly higher frequency than the exogenous variables.
\end{assumption}

This assumption makes sense intuitively: agents could not exert control if the world \dd{slipped out of their hands} too quickly. From a formal perspective, it implies that exogenous variables assign the same values to the factual and to the counterfactual twins of the SFM.

\section{Intentional Interventions and Teleological Inference}\label{app:definition}

In this section we discuss the definition of intentional intervention and the results of using it for teleological inference (agent detection and intention discovery).

\subsection{Intentional Interventions}

\setcounter{mydef}{5}
\begin{mydef}[Intentional intervention]
    {Given an SCM $\scm$, an intervention variable $X\in \envars$, a variable $Y \in \envars$ such that $Y \in \desc(X)$, and a function $f(y)$, an \emph{intentional intervention} $\iiintv{X^\star}{{f}(y)}$ is an operator applied to $\scm$ and generating a new \emph{final SCM} $\finscm$ constructed via a two-step procedure:}

    \begin{enumerate}
        \item {Replica: infer the distribution ${\probdistsu{}}_{\vert Y=y}$ in $\scm$ under the conditioning $Y=y$ and define a new model $\scmu{\rep} = \myset{\envars \cup \envars^\star, \exvars, \structfuncs\cup\structfuncs^\star, {\probdistsu{}}_{\vert Y=y}}$ where $\envars^\star, \structfuncs^\star$ replicates $\envars,\structfuncs$.}

        \item {Intervention: perform intervention $\dointv{X^\star}{f(y)}$ on $\scmu{\rep}$, generating SCM $\finscm$.}
    \end{enumerate}
\end{mydef}

We suggest that intentionally intervened variables do not listen to \emph{any} of their causes. At a first sight, this may sound implausible: one could argue that, before turning on the heater, one looks outside of the window (and so some variable such as the weather influences action). Carefully looking, this is false. In fact, one may turn on the heater whenever he or she expects to feel cold, without looking at the weather, and possibly \emph{despite} of the fact that the sun is shining outside.

More precisely, action only depends on the goal, and any other variable is just a way to assess if action is needed, that is, to estimate the counterfactual course of the world in case no action is taken. In our approach, agents \dd{listen to} what would happen to the aimed variable, and to do so they make an inference based on all the causal knowledge included in the causal model. In other words, by \dd{listening to} the counterfactual temperature, the agent is implicitly evaluating the impact of all of its causes in the model, notably weather. But these causes only play a role in counterfactual evaluation, and not directly on intervention, which is solely \dd{pulled} by its aim: if the agent does not want to feel cold, and estimates that action is needed, then he or she will turn on the heater \emph{no matter what} is the actual state of any cause. As an example, an agent making a mistake in counterfactual evaluation would turn on the heater even if there was actually no need for it (but this situation is excluded from the scope of the present paper by Assumption \ref{ass:perfect-knowledge}).

\paragraph{Irrational and immoral actions} About intentional interventions, we wrote: \emph{Importantly, notice that we rely on a deterministic function f, but we do not require it to be optimal or the agent to be moral or rational (\emph{contra} Halpern and Kleiman-Weiner, 2018), allowing us to capture the behavior of agents with limited computational capacity, affected by biases, or who are different from us.}

Most work on agency tries to understand it starting from general assumptions of rationality or morality. For example, by trying to state that turning on the heater is the most rational thing to do, in case we do not want to feel cold and the weather is bad. This \emph{top-down} approach has several known limitations, as human (and other) agents almost never act rationally \citep{KahnemanTversky1974}. Such modeling attempts are therefore faced with the constant risk of missing the reality of empirical practices.

In our \emph{bottom-up} approach, instead, any kind of action can be detected, and any kind of intention can be discovered. For example, we may find empirically that somebody smokes in order to get lung damage, and this despite any of our considerations about the agent's rationality or morality.
As long as the agent and the investigator share the same correct causal understanding of the world (Assumption \ref{ass:perfect-knowledge}), actions and their aims are transparent. Future work might deal with model misspecification, taking into account that either the agent's or the investigator's causal knowledge may be wrong, and so in some cases possibly preventing the exact discovery of intentions from data alone.

\subsection{Teleological Inference}

\setcounter{myprop}{1}
\begin{myprop}[Identifiability for detecting agents in simple colliders]
    Given an instance of the problem $\langle\scmdag,\mathcal{D}\rangle$ of detecting an intentional agent in a simple collider, if the agent intervenes on one of the parents $\parents(C)$ of the collider $C$, an intentional intervention can be detected from $\mathcal{D}$ as a violation of Markovianity in $\scmdag$.
\end{myprop}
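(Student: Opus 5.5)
The plan is to work directly with the SFM generated by the intentional intervention and compare its d-separation statements with those of $\scmdag$. Take the simple collider $A \rightarrow C \leftarrow B$, with $\parents(C)=\{A,B\}$ and independent exogenous variables $U_A,U_B,U_C$ (Markovianity of $\scm$). Without loss of generality the agent intervenes on $A$, and by \Cref{def:intentional_intervention} the goal variable must lie in $\desc(A)$. The relevant choice is $Y=C$, since otherwise the intervention is just $f$ of $A$ itself, a case covered by \Cref{lem:counterfactual}.

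The key steps are these. First, build $\finscm$ for $\iiintv{A^\star}{f(c)}$. The replica step produces the twin nodes $A,B,C$ and $A^\star,B^\star,C^\star$ sharing $U_A,U_B,U_C$, as allowed by \Cref{ass:high-frequency}. The intervention step deletes the edge $U_A\to A^\star$ and adds the edge $C\to A^\star$, exactly as in \Cref{fig:final-systems}(a).

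Second, observe that the data $\mathcal{D}$ are drawn from the factual variables $A^\star,B^\star,C^\star$. In $\scmdag$ we have $A\perp B$, because the only path between them is blocked at the unconditioned collider $C$.

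Third, exhibit in the SFM's DAG, which is acyclic by \Cref{prop:DAG}, the path
\[
A^\star \leftarrow C \leftarrow B \leftarrow U_B \rightarrow B^\star .
\]
This path contains no colliders and no conditioned nodes, so it is open, and hence $A^\star$ and $B^\star$ are d-connected. The SFM is itself a Markovian SCM over the joint set of variables, so d-separation remains sound there.

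d-connection alone gives only possible dependence. To turn it into actual dependence, I would invoke faithfulness of the induced distribution, or, more concretely, require that $f$ is non-constant and that $C$ depends nontrivially on $B$ under $f_C$. Under those conditions $P(A^\star\mid B^\star)\neq P(A^\star)$. In the binary heating example this can be checked by direct computation: if $f(c)=1-c$ and $T=W\lor H$, then $H^\star=1-W$, which is the pattern in the table of \Cref{fig:basic-heating-system}(a).

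Finally, conclude. Observing $A^\star \not\perp B^\star$ in $\mathcal{D}$ contradicts the independence $A\perp B$ entailed by $\scmdag$. Assuming the causal DAG is correct, as in the problem statement, this violation of Markovianity can be explained only by an intervention of this kind, so it flags the agent. Since the two parents play symmetric roles, the same argument handles an intervention on $B$.

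The main obstacle is the step from d-connection to genuine dependence. One needs a nondegeneracy condition on $f$ and on $f_C$, because a constant $f$ makes the intervention indistinguishable from a perfect one, as \Cref{lem:intervention} shows. I would state this nondegeneracy explicitly as part of what \dd{intentional} means here (goal-directedness, D3), rather than appeal to generic faithfulness.
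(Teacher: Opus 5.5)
Your core argument is the paper's own. The appendix proof builds the same SFM and contrasts $X\perp Z$ in $\scmdag$ with the open backdoor path $X^\star \leftarrow Y \leftarrow Z \leftarrow U_Z \rightarrow Z^\star$, which is exactly your path $A^\star \leftarrow C \leftarrow B \leftarrow U_B \rightarrow B^\star$. Like you, it then notes that the argument extends to more parents.

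The paper, however, never shows that the intervention \emph{forces} a dependence. It takes the violation in $\mathcal{D}$ as given and only shows that the SFM d-connects the observed parents, i.e.\ restores Markovianity. You are right that the forward direction needs a nondegeneracy hypothesis, but the one you prefer over faithfulness does not suffice. Since the twins share $U_B$, we have $B^\star=B$, and $A^\star=f(C)$. So $(A^\star,B^\star)$ has the law of $(f(C),B)$ in $\scm$, and a violation occurs if and only if $f(C)\not\perp B$ in $\scm$.

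A non-constant $f$ together with an $f_C$ that depends on $B$ does not guarantee this. Take $A\sim\mathtt{Bern}(1/2)$, $C=A\oplus B$, and $f(c)=1-c$. Then $C\perp B$, hence $A^\star\perp B^\star$. Moreover $C^\star=A^\star\oplus B^\star$ with $A^\star$ uniform, so the observed joint law equals the unintervened one and the agent leaves no trace at all. You need either faithfulness of the SFM distribution or the condition $f(C)\not\perp B$ stated directly.

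Your heating check has a related slip. In the SFM the agent listens to the counterfactual temperature $T=W\lor H$, in which the unintervened heater $H=U_H$ is still active. Hence $H^\star=1-(W\lor H)=(1-W)(1-H)$, not $1-W$. The table's pattern arises only if $U_H\equiv 0$. With $U_H\sim\mathtt{Bern}(0.5)$, as in the paper's simulation, you get $P(H^\star=1\mid W^\star=1)=0$ versus $P(H^\star=1\mid W^\star=0)=1/2$. That is still a dependence, but not the deterministic one you claim.

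Finally, the case $Y=A$ is not disposed of by \Cref{lem:counterfactual}, which covers only constant $f$. It is excluded because, with goal $A$, the SFM contains no open path between $A^\star$ and $B^\star$. The proposition therefore genuinely presupposes, as the paper's proof does, that the agent listens to the collider.
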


We claim that intentional interventions can be detected whenever they assign values to simple colliders, because this inevitably produces violations of Markovianity, which can be measured. There are some limit cases:
\begin{itemize}

\item Imagine an agent assigning values completely at random, possibly with the aim of masking its presence. In this case, Markovianity would not be violated, but we could observe a distribution shift in the model. 

\item The agent would have better chances at passing unnoticed, if it assigned values exactly in the way the causal system would do without any intervention. Conceptually, this would still be an intervention (there would be an agent assigning values), but it would leave no trace in the data and so it would be impossible to detect empirically. This problem has been treated philosophically by L. Wittgenstein, G.E.M. Anscombe and others as the aporia of distinguishing between the event of an anesthetized arm falling down and the action of somebody who makes his or her own arm descend exactly at the same speed \citep{hornsby1980}.

\item A similar issue pertains to perfect interventions. If the constant value assigned to the intervened variable corresponds to the unintervened value that the causal system would assign on its own (that is, the variable only takes one value even in case of no intervention), then the perfect intervention falls into the class of the undetectable interventions.
\end{itemize}

\begin{myprop}[Identifiability for intention discovery in simple forks]
    Given an instance of the problem $\langle \scm,\finscm \rangle$ of discovering an intention in a simple fork, the argument $y$ of the intentional intervention $\iiintv{X^\star}{{f}(y)}$ can be identified by intervening on the descendants of $X$.
\end{myprop}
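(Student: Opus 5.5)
The plan is to work directly with the twin graph of the SFM for the simple fork, as in \Cref{fig:final-systems}(b). Take a fork $X \to Y_1$, $X \to Y_2$ in $\scmdag$; more generally, let $Y_1,\dots,Y_k$ be the children of $X$. Assume the intentional intervention $\iiintv{X^\star}{f(y)}$ has argument $Y \in \desc(X)$, so in the simple fork $Y \in \{Y_1,\dots,Y_k\}$.

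First, read off the structure of $\finscm$ from \Cref{def:intentional_intervention}. In the replica step the factual variables $Y_j^\star$ share $U_{Y_j}$ with the counterfactual variables $Y_j$. In the intervention step the structural function of $X^\star$ is replaced by $f(y)$, so the only parent of $X^\star$ in the SFM is the counterfactual variable $Y$; by \Cref{prop:DAG} the resulting graph is acyclic. The key observation is that $X^\star$ is a deterministic function of the counterfactual $Y$ alone. Here $Y = f_Y(X, U_Y)$, and $X$ is the unintervened counterfactual value.

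Second, define the experimental protocol. For each candidate $Y_j$, perform a (perfect or soft) intervention on the \emph{counterfactual/grounding} mechanism of $Y_j$, that is, alter $f_{Y_j}$ or $U_{Y_j}$; this is the sense of ``intervening on the descendants of $X$''. Since the agent, by \Cref{ass:perfect-knowledge}, knows the true causal model, it evaluates the counterfactual $Y$ under the modified mechanism. Graphically, changing the mechanism of $Y_j$ affects $X^\star$ iff there is a directed path from $Y_j$ to $X^\star$ in the SFM. In the fork, the only edge into $X^\star$ comes from $Y$, and distinct children $Y_i, Y_j$ share no exogenous variables and have no directed paths between them. Hence that path exists iff $Y_j = Y$.

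Third, turn this graphical statement into a distributional one. For $Y_j \neq Y$, the distribution of $X^\star$ is invariant under the manipulation of $Y_j$, because $X^\star \perp\!\!\!\perp$ the manipulated mechanism by d-separation. For $Y_j = Y$, the distribution of $X^\star$ changes, provided $f$ is non-constant on the support of $Y$ and the manipulation shifts the distribution of $Y$. Comparing the distribution of $X^\star$ across the $k$ experiments therefore singles out the unique $Y_j$ producing a change, which identifies the argument $y$.

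The main obstacle is this faithfulness-type condition. If $f$ is constant, or insensitive to the induced shift in $Y$, no change is observed and identification fails. This is the same degeneracy as the undetectable interventions discussed after \Cref{prop:detecting_agents}. I would handle it by stating an explicit non-degeneracy assumption, namely that $f$ is non-constant and the manipulation changes the law of $Y$, and by choosing the manipulation of $Y$ to span its support, for example by setting $Y$ to each of its values in turn.
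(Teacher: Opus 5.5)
Your proposal is correct and follows essentially the same route as the paper. Both arguments intervene on each child of $X$ in the grounding model, let the agent act, and note that the law of $X^\star$ shifts only under intervention on the child that is the parent of $X^\star$ in the SFM, staying invariant for the others. Your explicit non-degeneracy requirement ($f$ non-constant on the support of the target, with the manipulation moving that target's law) makes precise what the paper leaves implicit when it appeals to ``the definition of (interventional) causality'' to assert $P(X^\star,Z^\star,\bar{y}) \neq P(X^\star,Z^\star,y')$ for some $\bar{y}\neq y'$.
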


Intentions may be discoverable either from interventional (experimental) or observational data. In the present paper we only show how to identify intentions with interventional data (Appendixes \ref{app:proofs} and \ref{app:simulations}) and leave observational data to future work.

\section{Proofs}\label{app:proofs}

In this section we provide proofs for our propositions and lemmas.

\setcounter{myprop}{0}
\begin{myprop}
    SFMs implies DAGs.
\end{myprop}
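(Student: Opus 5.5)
The plan is to exhibit an explicit topological order of the SFM's variables; a directed graph that admits a topological order has no directed cycles. Write $\finscm$ for the SFM produced by $\iiintv{X^\star}{f(y)}$ from an SCM $\scm$ whose graph $\scmdag$ is a DAG (as assumed throughout). By construction its endogenous variables are $\envars \cup \envars^\star$. First I would read off the edge set of the induced graph directly from \Cref{def:intentional_intervention}; there are three kinds of edges.

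\emph{Replica edges.} The replica step copies $\structfuncs$ onto $\envars$ unchanged and $\structfuncs^\star$ onto $\envars^\star$, so both copies keep the edge structure of $\scmdag$. The step only changes the exogenous distribution to $\probdistsu{}_{\vert Y=y}$. The exogenous variables are shared by the two copies, but they are roots: they have no incoming edges and therefore cannot lie on a directed cycle.

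\emph{Intervention on $X^\star$.} The intervention $\dointv{X^\star}{f(y)}$ is a mechanism change. It deletes every edge into $X^\star$ (from $\parents(X^\star)$ and from $U_X$) and adds a single new edge from the actual variable $Y \in \envars$ to $X^\star$.

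\emph{No other cross edges.} Apart from the edge $Y \to X^\star$, no function of the starred copy takes an unstarred argument, and no function of the unstarred copy takes a starred argument.

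Next I would build the order. Take any topological order $\pi$ of $\scmdag$ and let $\pi^\star$ be the same order on the starred copies. Place all of $\envars$ in the order $\pi$, followed by all of $\envars^\star$ in the order $\pi^\star$. Every edge then points forward:
\begin{itemize}
\item edges inside $\envars$ point forward because $\pi$ is a topological order of $\scmdag$;
\item edges inside $\envars^\star$ point forward because $\pi^\star$ is, and deleting the edges into $X^\star$ cannot create a backward edge;
\item the only cross edge, $Y \to X^\star$, goes from the first block to the second.
\end{itemize}
Since some order makes every edge go forward, the induced graph of $\finscm$ is acyclic.

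An equivalent way to say this is that any directed cycle would have to cross between the two copies, which requires an edge from $\envars^\star$ back to $\envars$, and no such edge exists.

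The only delicate step is confirming that the added edge is $Y \to X^\star$ and not $Y^\star \to X^\star$. The agent \dd{listens to} the counterfactual, unintervened $Y$, as in \Cref{fig:SCM-graphs}(d). If the edge came from $Y^\star$ instead, the requirement $Y \in \desc(X)$ would give a path $X^\star \to \cdots \to Y^\star \to X^\star$, which is a cycle. So the proof has to state explicitly that $f$'s argument is the variable $Y$ of the unintervened copy. This is exactly where the twinning buys acyclicity, in contrast with the endogenous-intention model of \Cref{fig:basic-solutions-heating-system}(b), which is cyclic. The same argument extends verbatim to the limit cases of \Cref{lem:counterfactual} and \Cref{lem:intervention}, since there the added edge is either absent or again runs from $\envars$ into $\envars^\star$.
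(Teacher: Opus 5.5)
Your proposal is correct and follows essentially the same argument as the paper. Both rest on two facts: each copy ($\envars$ and the intervened $\envars^\star$) is acyclic, and the only cross edge, $Y \to X^\star$, runs from $\envars$ into $\envars^\star$ with nothing returning, so your concatenated topological order ($\pi$ followed by $\pi^\star$) repackages the paper's case analysis on where a cycle could start, and your explicit check that the added edge leaves the unintervened $Y$ rather than $Y^\star$ (which, given $Y \in \desc(X)$, would close a cycle) spells out what the paper leaves as ``by construction.''
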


\emph{Proof.}
Let $\scm=\scmsignature$ be the original SCM, and let $\finscm=\finscmsignature$ be the final SCM produced by the intentional intervention $\iiintv{X^\star}{f(y)}$; let also $\scmdag$ be the DAG over $\envars$, and $\mathcal{G}_{\mathcal{M}^\star}$ the DAG over $\envars^{\star}$.

Let us recall the definition of a directed path in a directed graph $\langle \mathcal{V}, \mathcal{E} \rangle$ defined over vertices $\mathcal{V}$ and edges $\mathcal{E}$. A directed path $p$ is an ordered set of vertices $(X_1,...,X_n)$, for $n \geq 1$, $X_i \in \mathcal{V}$ and $(X_i,X_{i+1}) \in \mathcal{E}$. A directed path $p=(X_1,...,X_n)$ is a cycle if $X_1=X_n$. A directed graph is acyclic (DAG) if it does not admit any cycle.

To prove that a SFM implies a DAG, we need to show that $\scmdagu{\fin}$ admits no cycle. Let us consider a generic directed path $p=(X_1,...,X_n)$; for a cycle to exist it must hold that $X_1=X_n$, with $X_1,X_n$ elements of $\envars$ or $\envars^{\star}$. Let us consider the two possibilities:
\begin{enumerate}
    \item $X_1, X_n \in \envars$: a cycle from an endogenous variable in $\scmdag$ to itself can either be confined to $\scmdag$ or span $\scmdag$ and $\mathcal{G}_{\mathcal{M}^\star}$. However, if it is confined to $\scmdag$ then no cycle is admissible because $\scmdag$ is by definition a DAG; if the path crosses from $\scmdag$ to $\mathcal{G}_{\mathcal{M}^\star}$ then by construction no edge is present to return to $\scmdag$.
    \item $X_1, X_n \in \envarsu{\fin}$: a cycle from an endogenous variable in $\mathcal{G}_{\mathcal{M}^\star}$ to itself must be confined to $\mathcal{G}_{\mathcal{M}^\star}$, as no edge is present to transition to $\scmdag$. Thus, if confined to $\mathcal{G}_{\mathcal{M}^\star}$, no cycle is admissible because $\mathcal{G}_{\mathcal{M}^\star}$ is by definition a DAG.
\end{enumerate}

Therefore, in both cases, no cycle can be defined. Hence $\scmu{\fin}$ is a DAG. $\blacksquare$


\setcounter{mylemma}{0}
\begin{mylemma}
    A counterfactual $\ctf{X}{x}{X}{x^*}$ is an intentional intervention $\iiintv{X^\star}{f(y)}$ for which: (i) \Cref{ass:high-frequency} holds perfectly; (ii) $Y=X$; (iii) $f(y)=x^*$.
\end{mylemma}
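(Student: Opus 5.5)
The plan is to unfold \Cref{def:intentional_intervention} under the three stipulated specializations and compare the result with the twin model of \Cref{def:twin_model}. That twin model is the graphical representation of the counterfactual $\ctf{X}{x}{X}{x^*}$. I will show that the two SCMs coincide component by component: endogenous variables, exogenous variables, structural functions and exogenous distributions. They then induce the same model and the same graph. The comparison naturally splits into the replica step and the intervention step.

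\emph{Replica step.} Setting $Y=X$ in \Cref{def:intentional_intervention} is admissible because $X \in \desc(X)$ by the convention in \Cref{app:notation}. The replica then conditions the exogenous distribution on $X=x$, where $x$ is the observed value of the goal variable, giving ${\probdistsu{}}_{\vert X=x}$. The resulting $\scmu{\rep}=\myset{\envars\cup\envars^\star,\exvars,\structfuncs\cup\structfuncs^\star,{\probdistsu{}}_{\vert X=x}}$ has the same shape as the replica in \Cref{def:twin_model}, up to renaming $\envars^\star$ as $\envars^*$. This is where condition (i) enters. Sharing a single copy of $\exvars$ between $\envars$ and $\envars^\star$, with one abducted distribution, is only a faithful description when the exogenous variables take literally identical values in both twins. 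This is the formal reading of \Cref{ass:high-frequency} given in \Cref{app:desiderata}. Under approximate high frequency the two constructions would differ, so perfect validity is exactly what licenses the identification.

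\emph{Intervention step.} With $f(y)=x^*$ constant, the mechanism change $\dointv{X^\star}{f(y)}$ reduces to the perfect intervention $\dointv{X^\star}{x^*}$. Graphically, the edge from $Y=X$ into $X^\star$ that an intentional intervention normally adds is dropped, because $f$ no longer depends on $y$. All other incoming edges into $X^\star$ are removed as in any perfect intervention. The resulting graph is exactly the twin graph of \Cref{fig:SCM-graphs}(c).

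Two further points complete the argument. First, the factual model $\envars$ under ${\probdistsu{}}_{\vert X=x}$ is the abducted SCM, and the starred copy after $\dointv{X^\star}{x^*}$ is the counterfactual SCM $\ctfscm$. Second, I must address the inverted reading in which, for the SFM, the starred variables are called factual. I expect this interpretive mismatch, rather than any computation, to be the main obstacle. I will handle it by arguing that the lemma is a statement about the generated SCM, which is identical in both cases, and that the labels ``factual'' and ``counterfactual'' are an interpretation external to the operator.
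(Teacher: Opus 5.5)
Your proposal is correct and follows essentially the same route as the paper: a step-by-step comparison of the replica step (identical conditioning because $Y=X$, admissible since $X\in\desc(X)$, with condition (i) licensing the shared exogenous variables) and the intervention step (constant $f$ reducing the operation to $\dointv{X^\star}{x^*}$). Your treatment of the factual/counterfactual labels as an interpretation external to the formal construction also matches the paper's closing remark: the identity holds once the experimenter reads $\envars$ as the actual world and $\envars^\star$ as the counterfactual one.
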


\emph{Proof.} In order to show the identity of the two operators, $\ctf{X}{x}{X}{x^*}$ and $\iiintv{X^\star}{f(y)}$, under conditions (i-iii), it will be sufficient to show that the construction of the respective models, $\ctfscm, \finscm$, as given by \Cref{def:twin_model} and \Cref{def:intentional_intervention}, are identical.

First of all, notice that condition (i) is necessary for the computation of counterfactual.

Next, let us consider step 1 in the construction procedure of \Cref{def:twin_model} and \Cref{def:intentional_intervention}. The two constructions are identical as soon as the conditioning sets are identical, which is guaranteed by condition (ii). Both construction performs an identical abduction since $Y=X$. Also notice that, by definition, $X$ is a descendant of itself, $X\in\desc(X)$, so \Cref{def:intentional_intervention} is satisfied.

Last, step 2 of the construction procedure is forced to be identical by condition (iii), which requires the intentional intervention to be a constant $\iiintv{X^\star}{x^*}$.

Thus, the two procedures construct identical models $\ctfscm = \finscm$. $\blacksquare$\\

Notice that, with respect to the discussion in \Cref{ssec:intentional_interventions}, this identity hold as long as the experimenter using an intentional intervention interprets the variables $\envars$ as the actual observed world, and the model $\envars^\star$ at the counterfactual imaginary world. 

\begin{mylemma}
    A perfect intervention $\dointv{X}{x}$ is an $\iiintv{X^\star}{f(y)}$ intentional intervention for which: (i) \Cref{ass:high-frequency} does not necessarily hold; (ii) $Y=\emptyset$; (iii) $f(y)=x$. 
\end{mylemma}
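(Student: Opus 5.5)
We will argue exactly as in the proof of \Cref{lem:counterfactual}: the claim amounts to showing that, under conditions (i)--(iii), the two-step construction of \Cref{def:intentional_intervention} produces a model whose observed part coincides with the post-intervention SCM $\scmu{\doopt}$ of the perfect intervention $\dointv{X}{x}$. We compare the two steps of the construction in turn, the Replica step and then the Intervention step.

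\emph{Replica step.} By condition (ii) the conditioning set is empty. Hence the abduction ${\probdistsu{}}_{\vert Y=y}$ reduces to the prior $\probdists$, and $\scmu{\rep}=\myset{\envars\cup\envars^\star,\exvars,\structfuncs\cup\structfuncs^\star,\probdists}$ is simply two copies of $\scm$ sharing the same exogenous variables. The requirement $Y\in\desc(X)$ of \Cref{def:intentional_intervention} must be read as vacuously satisfied when $Y=\emptyset$. We make this convention explicit, in the same way the proof of \Cref{lem:counterfactual} had to check $X\in\desc(X)$.

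\emph{Intervention step.} By condition (iii), $f(y)=x$ is a constant, so $\dointv{X^\star}{f(y)}$ is the perfect intervention $\dointv{X^\star}{x}$ applied to the starred copy. Consequently no edge runs from $\envars$ into $\envars^\star$. The only links between the twins are the shared exogenous variables, in contrast with \Cref{fig:SCM-graphs}(d).

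The main obstacle is that $\finscm$ still contains the unintervened copy $\envars$, so it is not literally equal to $\scmu{\doopt}$. We resolve this as follows. Since $f$ ignores $\envars$, the variables in $\envars^\star$ are not descendants of any variable in $\envars$ in $\scmdagu{\fin}$. Therefore the sub-model over $\envars^\star$ with functions $\structfuncs^\star$ (with $f_X$ replaced by $x$) and exogenous distributions $\probdists$ is self-contained. It is exactly $\scmu{\doopt}$ up to renaming $\envars^\star\mapsto\envars$. In particular, the joint distribution of the observed (factual) variables $\envars^\star$ equals the interventional distribution of $\scmu{\doopt}$. The copy $\envars$ can then be marginalized out without affecting anything observed.

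For condition (i), note that the high-frequency assumption was used only to justify that the two twins share identical exogenous values. Here no quantity in $\envars^\star$ depends on $\envars$, so the marginal over $\envars^\star$ is the same whether or not the exogenous variables are shared or resampled. The assumption is therefore unnecessary, which is the sense of ``does not necessarily hold.'' Putting the two steps together, the identity $\scmu{\doopt}=\finscm\vert_{\envars^\star}$ follows.
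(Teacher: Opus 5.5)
Your proof is correct and follows the same skeleton as the paper's. Condition (ii) makes the Replica step a conditioning on nothing, so the exogenous distributions are unchanged, and condition (iii) turns the Intervention step into the perfect intervention $\dointv{X^\star}{x}$ on the starred copy.

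The difference is the object you compare against. The paper first rewrites $\dointv{X}{x}$ as the degenerate counterfactual $\ctfinv{X}{x}$, i.e.\ a twin model with empty conditioning. It then shows that this two-copy model coincides literally with $\finscm$, mirroring the proof of \Cref{lem:counterfactual}. The claim that the counterfactual world of that twin model ``represents the intervention of interest'' is asserted rather than argued. Condition (i) is dismissed by saying that only a single world is involved.

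You compare directly with the single-world model $\scmu{\doopt}$, so you must deal with the extra unintervened copy $\envars$. Your observation does exactly that: no edge enters $\envars^\star$ from $\envars$, so the restriction to $\envars^\star$ is a self-contained SCM equal to $\scmu{\doopt}$ up to renaming. This is precisely the content hidden in the paper's rewriting step.

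Your route also buys two things. It gives a sharper reason why (i) can be dropped: the marginal over $\envars^\star$ depends only on the prior over $\exvars$, so whether exogenous values are shared or resampled is immaterial. It also makes explicit the convention needed for the requirement $Y\in\desc(X)$ when $Y=\emptyset$, which the paper passes over. What you give up is a literal identity of models: you obtain $\scmu{\doopt}=\finscm\vert_{\envars^\star}$, whereas the paper obtains $\ctfscm=\finscm$ and pushes the single-world identification into the notation.
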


\emph{Proof.} A perfect intervention $\dointv{X}{x}$ can be written in counterfactual notation as $\ctfinv{X}{x}$, that is, a case in which the counterfactual world has no constraints from the actual world; the counterfactual world will then simply represent the intervention of interest.

As for counterfactual, we will show that the identity of the two operators, $\ctfinv{X}{x}$ and $\iiintv{X^\star}{f(y)}$, under conditions (i-iii), is entailed by the construction of the respective models, $\ctfscm, \finscm$, according to \Cref{def:twin_model} and \Cref{def:intentional_intervention}.

Differently from proper counterfactuals, notice that condition (i) is now not necessary, since we are computing interventions on a single world.

Let us move to step 1 in the construction procedure of \Cref{def:twin_model} and \Cref{def:intentional_intervention}. The two constructions are identical as soon as the conditioning sets are identical, which is guaranteed by condition (ii), $Y=\emptyset$. In particular, neither is conditioning on anything, thus leaving the probability distributions of the original model unchanged.

Last, step 2 of the construction procedure is forced to be identical by condition (iii), which requires the intentional intervention to be a constant $\iiintv{X^\star}{x}$.

Thus, the two procedures construct identical models $\ctfscm = \finscm$. $\blacksquare$\\

We can also formalize a relation to dynamic treatments as follows:
\setcounter{theorem}{3}
\begin{mylemma} \label{lem:lemma3}
    A dynamic treatment $\dointv{X}{{g}({z})}$ is an intentional intervention $\iiintv{X^\star}{f(y)}$ for which: (i) the dynamic SCM on which the dynamic treatment is applied does not have edges across timesteps; (ii) the time-horizon of the dynamic SCM is $T=2$; (iii) \Cref{ass:high-frequency} does not necessarily hold; (iv) ${Y}={Z}$; (v) $f(y)=g(z)$.
\end{mylemma}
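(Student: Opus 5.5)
We follow the same strategy as the proofs of \Cref{lem:counterfactual} and \Cref{lem:intervention}. We show that, under conditions (i)--(v), applying the dynamic treatment $\dointv{X}{g(z)}$ to the dynamic SCM and applying the intentional intervention $\iiintv{X^\star}{f(y)}$ to the atemporal SCM obtained from one timestep produce the same model, up to a renaming of variables.

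\textbf{Step 1: identify the two timesteps with the two twins.} Under (ii) the dynamic SCM has only two slices, $\envars^{(1)}$ and $\envars^{(2)}$. Under (i) there are no edges across timesteps, and the structural functions and distributions are stable in time. Each slice is therefore a copy of the same atemporal SCM $\scm$ over $\envars$, with structural functions $\structfuncs$. We map $\envars^{(1)}\mapsto\envars$, the unintervened counterfactual copy, and $\envars^{(2)}\mapsto\envars^\star$, the replica that is acted upon. The graph over $\envars^{(1)}\cup\envars^{(2)}$ then coincides with the graph of $\scmu{\rep}$ before intervention: two disjoint copies of $\scmdag$.

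\textbf{Step 2: match the exogenous distributions.} In the dynamic SCM the exogenous variables $U_i^{(1)}$ and $U_i^{(2)}$ are drawn separately. In $\scmu{\rep}$ the two copies share $\exvars$. This is exactly why condition (iii) is needed: since we do not require \Cref{ass:high-frequency}, the replica step need not tie the exogenous values of the two copies. We then argue, as in \Cref{lem:intervention}, that the only link between the copies relevant to the construction is the one introduced by the intervention. The conditioning on $Y=y$ in the replica step is absorbed by letting $f$ read the realized value $y$ of the first copy, since with $Y=Z$ from (iv) the dynamic treatment reads $z$ from the earlier slice.

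\textbf{Step 3: match the interventions.} The dynamic treatment's admissibility condition requires every $Z$ to lie at a timestep strictly before $X$. Here that means $Z\in\envars^{(1)}$ and $X\in\envars^{(2)}$. Under the renaming this is $\dointv{X^\star}{g(z)}$ with $z$ read from $\envars$. By (iv) and (v) this equals $\dointv{X^\star}{f(y)}$, which is step 2 of \Cref{def:intentional_intervention}. We must also check the requirement $Y\in\desc(X)$. It refers to descendancy in the atemporal $\scmdag$, where $Y$ and $X$ are variables of the same copy. This is a constraint on which dynamic treatments qualify, so we read the statement as ranging over treatments whose $z$ is, atemporally, a descendant of $X$.

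\textbf{Main obstacle.} The hard part is Step 2. With independent resampling across timesteps, the two copies do not literally share $\exvars$, so the correspondence is an identity of the induced graph and mechanism change rather than of the full joint distribution. We plan to state precisely that condition (iii) relaxes the shared-exogenous requirement, and conclude $\scmu{\doopt}=\finscm$ modulo relabelling.
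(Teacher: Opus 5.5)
Your proposal follows the paper's proof essentially step for step. Conditions (i)--(ii) collapse the dynamic SCM to two disconnected copies aligned with the unintervened and intervened twins, condition (iii) absorbs the fact that exogenous values are not shared across timesteps, and (iv)--(v) make the mechanism changes coincide as in \Cref{lem:counterfactual} and \Cref{lem:intervention}. Your extra remarks are refinements the paper leaves implicit rather than departures from its argument: the $Y\in\desc(X)$ admissibility check, the observation that $Z$ must sit in the first slice, and the explicit ``modulo relabelling'' caveat.
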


\emph{Proof.} Aligning dynamic treatments and intentional interventions is less straightforward than the alignment with perfect interventions or counterfactuals. This is due to the problem that dynamic treatments and intentional interventions are applied to different objects, that is, to dynamic SCM and SCM, respectively.
So, first of all we need to align these two models. Conditions (i)-(ii) are used to reduce the dynamic SCM to which we apply $\dointv{X}{{g}({z})}$ to a trivial SCM by removing inter-step arrows and limiting the time-steps to 2. This allows us to align the first and the second timestep of the dynamic SCM to the twin construction of an intentional intervention.

Further, as in the case of interventions, the identity of the values of the exogenous variables is not guaranteed in a dynamic treatment; this is captured by condition (iii).

At this point, all that is left to show is that dynamic treatment and intentional intervention generate the same models. Under condition (iv)-(v) it is immediate to show that the same model is generated, as in the proof for \Cref{lem:counterfactual} and \Cref{lem:intervention}. $\blacksquare$\\

\setcounter{theorem}{3}
\begin{myprop}[Identifiability for detecting agents in simple colliders]
    Given an instance of the problem $\langle\scmdag,\mathcal{D}\rangle$ of detecting an agent in a simple collider, if the agent intervenes on one of the parents $\parents(C)$ of the collider $C$, an intentional intervention can be detected from $\mathcal{D}$ as a violation of Markovianity in $\scmdag$.
\end{myprop}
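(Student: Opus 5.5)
We formalize the simple collider as the SCM of \Cref{ex:heating_causal}: two parents $A,B$ with $\parents(C)=\{A,B\}$, independent exogenous variables $U_A,U_B,U_C$, and $C=f_C(A,B,U_C)$. The agent intervenes on $B$ with goal $C$, i.e.\ we apply $\iiintv{B^\star}{f(c)}$ as in \Cref{def:intentional_intervention}. The plan is to compare two graphs. In $\scmdag$, $A$ and $B$ are d-separated by the empty set, because the only path between them passes through the unconditioned collider $C$. Markovianity therefore requires $A\perp B$ in $\mathcal{D}$. We will show that in the SFM the observed pair $(A^\star,B^\star)$ is generically dependent, so that detection amounts to testing this marginal independence.

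First, we read the SFM graph, as in \Cref{fig:final-systems}(a). $A^\star$ keeps $U_A$ as its only parent, since $A$ is not intervened. $B^\star$ loses $U_B$ and gains the edge $C\to B^\star$. The unintervened $C$ has parents $A,B$, and $A$ has parent $U_A$. This gives the backdoor path $A^\star\leftarrow U_A\to A\to C\to B^\star$. This path contains no colliders, so $A^\star$ and $B^\star$ are d-connected given the empty set. By \Cref{prop:DAG} the SFM is a DAG, so d-separation reasoning is legitimate there. Under \Cref{ass:high-frequency}, $U_A$ is shared between the twins, hence $A^\star=A$ almost surely.

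Second, we turn d-connection into actual dependence. We compute $P(B^\star=b\mid A^\star=a)=P(f(C)=b\mid A=a)$, where $C$ is the counterfactual (unintervened) collider value. A violation follows whenever $f\circ f_C(a,B,U_C)$ has a distribution that depends on $a$. We will impose nondegeneracy conditions under which this holds: $C$ depends on $A$ in the unintervened model, and $f$ is non-constant on the support of $C$. Goal-directedness (D3) together with \Cref{ass:perfect-knowledge} is what justifies the second condition. For binary variables, as in the heating example, this reduces to a short explicit computation showing $P(B^\star=1\mid A^\star=0)\neq P(B^\star=1\mid A^\star=1)$. Finally, since $\mathcal{D}$ is sampled from the observed twin $\envars^\star$, the dependence appears in $\mathcal{D}$ as a violation of the independence $A\perp B$ implied by $\scmdag$.

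The main obstacle is the second step, because d-connection does not imply dependence. The limit cases listed in \Cref{app:definition} are counterexamples: an agent whose $f$ is constant, or cancellations that make $f(C)$ independent of $A$. We would handle this by stating the proposition under a faithfulness-type assumption for the SFM. Alternatively, we would restrict to goal-directed $f$, in the sense of (D3), that is not constant on the support of $C$, combined with $C\not\perp A$ in $\scm$. We would then verify dependence explicitly in the binary case, treating the general case as holding for generic parameters. A symmetric argument covers intervention on $A$ instead of $B$.
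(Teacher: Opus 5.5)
Your graphical core is the paper's own. Its proof exhibits the path $X^\star \leftarrow Y \leftarrow Z \leftarrow U_Z \rightarrow Z^\star$ in the collider SFM, which is your $A^\star\leftarrow U_A\to A\to C\to B^\star$ read backwards.

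What differs is the direction of the argument. The paper \emph{assumes} that $\mathcal{D}$ already shows $X\not\perp Z$. It then only argues that this dependence is explained, i.e.\ Markovianity is restored, once the data are read as the starred variables of the SFM. It never shows that an intentional intervention must produce the dependence. You prove that direction, via $P(B^\star=b\mid A^\star=a)=P(f(f_C(a,B,U_C))=b)$. You rightly observe that d-connection does not yield dependence, so some faithfulness-type hypothesis is unavoidable. The paper's own limit cases in the appendix (a mimicking agent, a constant assignment) show the statement is false without one. The paper's reading needs no extra hypothesis but establishes only compatibility; yours delivers actual detectability at the price of stating that hypothesis.

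Be careful with your fallback, though. The conditions ``$C\not\perp A$ in $\scm$ and $f$ non-constant on the support of $C$'' suffice only when $C$ is binary, since then $f$ is injective on $\{0,1\}$. Otherwise they fail structurally, not just for special parameters. With $C=2A+B$ and $f(c)=c \bmod 2$ you get $B^\star=B$, which is independent of $A^\star$ for every choice of distributions. Appealing to ``generic parameters'' therefore does not help. State the hypothesis directly: the law of $f(f_C(a,B,U_C))$ varies with $a$, which is exactly what faithfulness of the SFM for the pair $A^\star,B^\star$ would give you.

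Also, (D3) together with \Cref{ass:perfect-knowledge} does not force $f$ to be non-constant. If $B=1$ always best achieves the goal, a goal-directed agent uses $f\equiv 1$. That produces only a distribution shift and no violation of Markovianity.
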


\emph{Proof.} Consider the generic model in \Cref{fig:proof1} representing a simple collider affected by the intentional intervention $\iiintv{X^\star}{f(y)}$, which corresponds to the agent having knowledge of $Y$ and intervening on $X\in\parents(Y)$. In the intentional intervention setup we have knowledge of the causal graph structure $\scmdag$ and a dataset $\mathcal{D}=\setNelems{X^\star_i,Y^\star_i,Z^\star}$ with a violation of Markovianity wrt $\scmdag$ among the parents $\parents(Y)$ of $Y$.

In the original model $\scmdag$, Markovianity requires that the independencies in the graphical model are preserved in the data; specifically, it must hold that, since the parent $X$ is d-separated from the other parent $Z$, $X \perp_\mathcal{G} Z$, then $X$ and $Z$ must be independent in the data, $X \perp_\mathcal{D} Z$. However, the dataset $\mathcal{D}$ contains a violation of Markovianity, meaning that $X$ is not independent from $Z$.

Markovianity is instead explained and restored in the SFM, as the variables corresponding to the observed data, $X^\star$ and $Z^\star$, are not d-separated anymore; indeed, it is easy to identify a backdoor path between them: $X^\star \leftarrow Y \leftarrow Z \leftarrow U_Z \rightarrow Z^\star$.

Notice that the proof trivially generalizes to simple colliders with more parents, as a violation will still take place between the intentionally intervened parent and the other parents. $\blacksquare$\\

\begin{figure}
    \centering
    \begin{tikzpicture}
        [node distance = 1cm, thick,
        arrow/.style = {draw, ->},
        circ/.style = {draw, ellipse}]

        \begin{scope}[xshift=0]
            \node[circ, text=black, draw=black, fill=white, dashed] at (-1.2, 1) (H) {$X$};
            \node[circ, text=black, draw=black, fill=white, dashed] at (0, 0) (T) {$Y$};
            \node[circ, text=black, draw=black, fill=white, dashed] at (1.2, 1) (W) {$Z$};

            \node[circ, text=black, draw=black, fill=white, dashed] at (1.2, 2.5) (UW) {$U_Z$};
            \node[circ, text=black, draw=black, fill=white, dashed] at (0, -1.5) (UT) {$U_Y$};
            \node[circ, text=black, draw=black, fill=white, dashed] at (-1.2, 2.5) (UH) {$U_X$};
    
            \node[circ, text=black, draw=black, fill=white] at (2.8, 1) (H') {$f$};
            \node[circ, text=black, draw=black, fill=white] at (4, 0) (T') {$Y^\star$};
            \node[circ, text=black, draw=black, fill=white] at (5.2, 1) (W') {$Z^\star$};        
            
            \draw[arrow] (W) edge [bend left=0] (T);
            \draw[arrow] (H) edge [bend left=0] (T);
    
            \draw[arrow] (W') edge [bend left=0] (T');
            \draw[arrow] (H') edge [bend left=0] (T');

            \draw[arrow] (UW) edge [bend left=0] (W);
            \draw[arrow] (UH) edge [bend left=0] (H);
            \draw[arrow] (UT) edge [bend left=0] (T);
            
            \draw[arrow] (UW) edge [bend left=10] (W');
            \draw[arrow] (UT) edge [bend left=-10] (T');
            
            \draw[arrow] (T) edge [bend left=-10] (H');

        \end{scope}

        \end{tikzpicture}
    \caption{SFM for detecting intentional interventions in colliders.}
    \label{fig:proof1}
\end{figure}
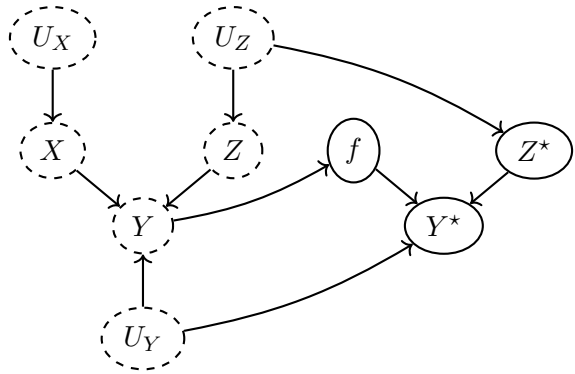

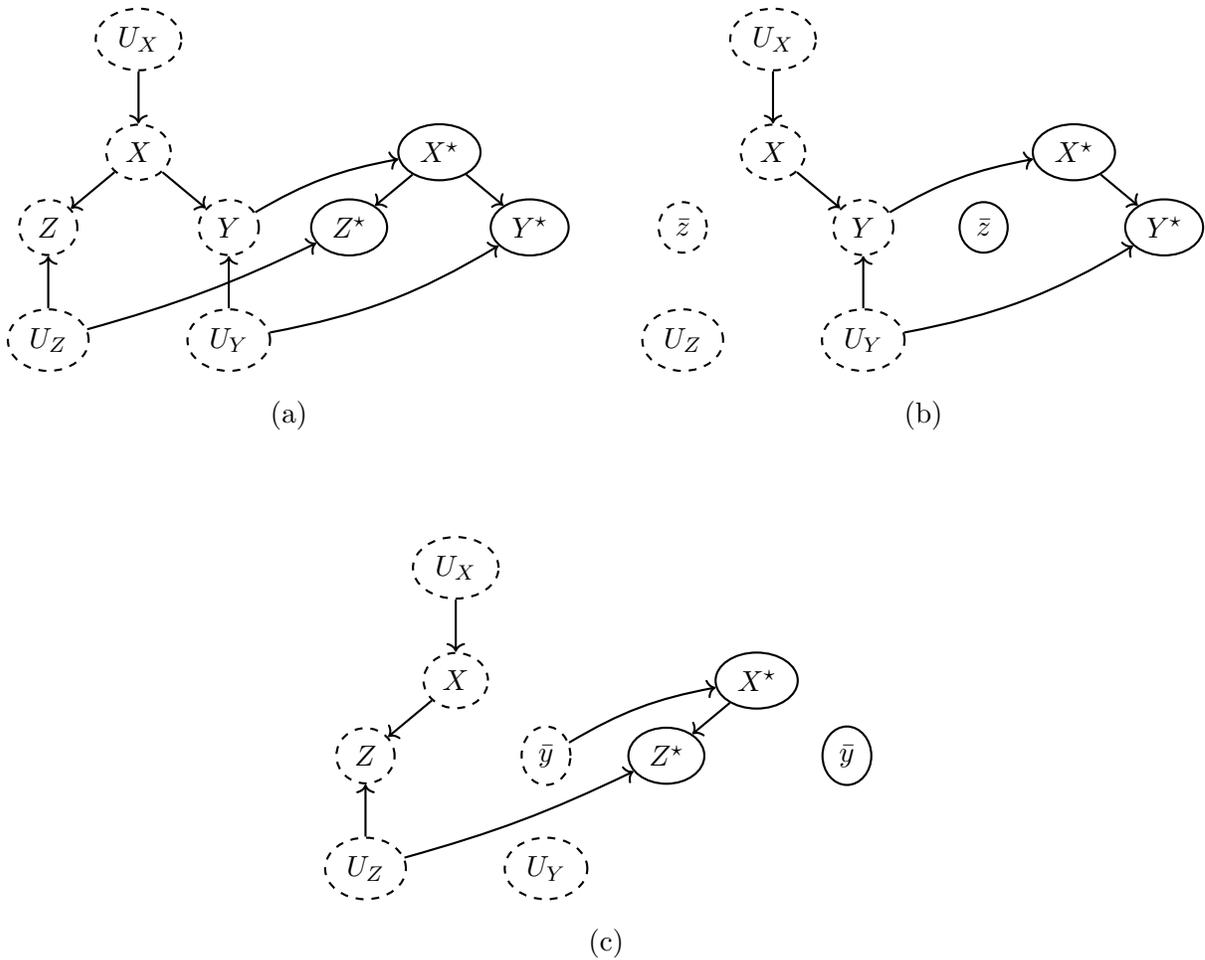
\begin{figure}
    \centering
    \begin{tikzpicture}
        [node distance = 1cm, thick,
        arrow/.style = {draw, ->},
        circ/.style = {draw, ellipse}]

        \begin{scope}[xshift=0]
            \node[circ, text=black, draw=black, fill=white, dashed] at (-1.2, -1) (D) {$Z$};
            \node[circ, text=black, draw=black, fill=white, dashed] at (0, 0) (Y) {$X$};
            \node[circ, text=black, draw=black, fill=white, dashed] at (1.2, -1) (P) {$Y$};
            
            \node[circ, text=black, draw=black, fill=white, dashed] at (-1.2, -2.5) (UD) {$U_Z$};
            \node[circ, text=black, draw=black, fill=white, dashed] at (0, 1.5) (UY) {$U_X$};
            \node[circ, text=black, draw=black, fill=white, dashed] at (1.2, -2.5) (UP) {$U_Y$};
    
            \node[circ, text=black, draw=black, fill=white] at (2.8, -1) (D') {$Z^\star$};
            \node[circ, text=black, draw=black, fill=white] at (4, 0) (Y') {$X^\star$};
            \node[circ, text=black, draw=black, fill=white] at (5.2, -1) (P') {$Y^\star$};        
            
            \draw[arrow] (Y) edge [bend left=0] (P);
            \draw[arrow] (Y) edge [bend left=0] (D);
    
            \draw[arrow] (Y') edge [bend left=0] (P');
            \draw[arrow] (Y') edge [bend left=0] (D');
            
            \draw[arrow] (UY) edge [bend left=0] (Y);
            \draw[arrow] (UP) edge [bend left=0] (P);
            \draw[arrow] (UD) edge [bend left=0] (D);
            
            \draw[arrow] (UP) edge [bend left=-10] (P');
            \draw[arrow] (UD) edge [bend left=-5] (D');
            
            \draw[arrow] (P) edge [bend left=10] (Y');

            \node at (2,-3.5) {(a)};
        \end{scope}

        \begin{scope}[xshift=240]
            \node[circ, text=black, draw=black, fill=white, dashed] at (-1.2, -1) (D) {$\bar{z}$};
            \node[circ, text=black, draw=black, fill=white, dashed] at (0, 0) (Y) {$X$};
            \node[circ, text=black, draw=black, fill=white, dashed] at (1.2, -1) (P) {$Y$};
            
            \node[circ, text=black, draw=black, fill=white, dashed] at (-1.2, -2.5) (UD) {$U_Z$};
            \node[circ, text=black, draw=black, fill=white, dashed] at (0, 1.5) (UY) {$U_X$};
            \node[circ, text=black, draw=black, fill=white, dashed] at (1.2, -2.5) (UP) {$U_Y$};
    
            \node[circ, text=black, draw=black, fill=white] at (2.8, -1) (D') {$\bar{z}$};
            \node[circ, text=black, draw=black, fill=white] at (4, 0) (Y') {$X^\star$};
            \node[circ, text=black, draw=black, fill=white] at (5.2, -1) (P') {$Y^\star$};        
            
            \draw[arrow] (Y) edge [bend left=0] (P);
    
            \draw[arrow] (Y') edge [bend left=0] (P');
            
            \draw[arrow] (UY) edge [bend left=0] (Y);
            \draw[arrow] (UP) edge [bend left=0] (P);
            
            \draw[arrow] (UP) edge [bend left=-10] (P');
            
            \draw[arrow] (P) edge [bend left=10] (Y');

            \node at (2,-3.5) {(b)};
        \end{scope}

        \begin{scope}[xshift=120, yshift=-200]
            \node[circ, text=black, draw=black, fill=white, dashed] at (-1.2, -1) (D) {$Z$};
            \node[circ, text=black, draw=black, fill=white, dashed] at (0, 0) (Y) {$X$};
            \node[circ, text=black, draw=black, fill=white, dashed] at (1.2, -1) (P) {$\bar{y}$};
            
            \node[circ, text=black, draw=black, fill=white, dashed] at (-1.2, -2.5) (UD) {$U_Z$};
            \node[circ, text=black, draw=black, fill=white, dashed] at (0, 1.5) (UY) {$U_X$};
            \node[circ, text=black, draw=black, fill=white, dashed] at (1.2, -2.5) (UP) {$U_Y$};
    
            \node[circ, text=black, draw=black, fill=white] at (2.8, -1) (D') {$Z^\star$};
            \node[circ, text=black, draw=black, fill=white] at (4, 0) (Y') {$X^\star$};
            \node[circ, text=black, draw=black, fill=white] at (5.2, -1) (P') {$\bar{y}$};        
            
            \draw[arrow] (Y) edge [bend left=0] (D);
    
            \draw[arrow] (Y') edge [bend left=0] (D');
            
            \draw[arrow] (UY) edge [bend left=0] (Y);
            \draw[arrow] (UD) edge [bend left=0] (D);
            
            \draw[arrow] (UD) edge [bend left=-5] (D');
            
            \draw[arrow] (P) edge [bend left=10] (Y');

            \node at (2,-3.5) {(c)};
        \end{scope}

        \end{tikzpicture}
    \caption{(a) SFM for intention discovery in forks. (b) SFM with intervention $\dointv{Z}{\bar{z}}$ for intention discovery in forks. (c) SFM with intervention $\dointv{Y}{\bar{y}}$ for intention discovery in forks.}
    \label{fig:proof2}
\end{figure}

\begin{myprop}[Identifiability for intention discovery in simple forks]
    Given an instance of the problem $\langle \scm,\finscm \rangle$ of intention discovery in a simple fork, the argument $y$ of the intentional intervention $\iiintv{X^\star}{{f}(y)}$ can be identified by intervening on the descendants of $X$.
\end{myprop}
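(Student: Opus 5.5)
We work with the generic fork of \Cref{fig:proof2}(a): $X$ has two children, $Y$ and $Z$. The intentional intervention is $\iiintv{X^\star}{f(y)}$, so the SFM contains the cross-twin edge $Y \to X^\star$. The competing hypothesis $\iiintv{X^\star}{f(z)}$ would instead contain the edge $Z \to X^\star$. The plan is to show that each hypothesis makes a distinct, testable prediction about how $X^\star$ responds to perfect interventions on the descendants of $X$. The argument is then confirmed by the intervention that moves $X^\star$ and ruled out by the one that does not.

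\textbf{Step 1: intervention on the non-argument.} Apply $\dointv{Z}{\bar z}$ to the SFM. Because $Z$ is an endogenous node of the causal model, the intervention acts on both $Z$ and $Z^\star$, since both copies share the structural function. This gives \Cref{fig:proof2}(b). In that graph the only edge entering $X^\star$ is $Y \to X^\star$, and $Y$ has ancestors $X, U_X, U_Y$ only. Hence $\bar z$ is not an ancestor of $X^\star$. Since $X^\star = f(Y)$ and no mechanism on that path is altered, the distribution of $X^\star$ is invariant to the choice of $\bar z$: $P(X^\star \mid \dointv{Z}{\bar z}) = P(X^\star)$ for every $\bar z$.

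\textbf{Step 2: intervention on the argument.} Apply $\dointv{Y}{\bar y}$, giving \Cref{fig:proof2}(c). Now $X^\star = f(\bar y)$ deterministically, so its distribution is a point mass that depends on $\bar y$. Whenever $f$ is non-constant (there exist $\bar y \neq \bar y'$ with $f(\bar y) \neq f(\bar y')$), the interventional distribution of $X^\star$ changes with $\bar y$. This non-constancy is the non-degeneracy condition the proof needs. I will justify it through the goal-directedness desideratum (D3) and \Cref{ass:perfect-knowledge}: an agent whose $f$ ignores $y$ reduces to a perfect intervention, as in \Cref{lem:intervention}, and is not intentional in the relevant sense.

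\textbf{Step 3: conclusion.} Under hypothesis $f(z)$ the roles are swapped by symmetry of the fork. If the agent listens to $Z$, then $X^\star$ is invariant under $\dointv{Y}{\bar y}$ and sensitive to $\dointv{Z}{\bar z}$. The two hypotheses therefore produce disjoint response patterns, and observing $X^\star$ under both interventions identifies the argument. The same test extends to the case where both variables are arguments, since $X^\star$ then responds to both interventions. For more descendants, one intervention per candidate suffices.

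\textbf{Main obstacle.} The main obstacle is making Step 1 rigorous. It is not immediately clear why intervening on $Z$ in the real world affects the counterfactual node $Z$ as well as $Z^\star$. The resolution is that the agent's counterfactual simulation uses the true mechanisms, by \Cref{ass:perfect-knowledge}, so an experimenter's mechanism change on $Z$ propagates to both twins. The second delicate point is the non-constancy of $f$; I will state it as an explicit, mild regularity assumption rather than prove it.
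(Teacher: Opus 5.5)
Your proposal is correct and follows essentially the same route as the paper's proof. Both intervene on each child of $X$ in both twin copies (as in \Cref{fig:proof2}(b,c)), note that $X^\star$ is unaffected by $\dointv{Z}{\bar z}$ while $\dointv{Y}{\bar y}$ forces $X^\star=f(\bar y)$, and extend to more children with one atomic intervention each. Your explicit non-constancy condition on $f$ and your remark that the experimenter's intervention must reach both twins make precise what the paper leaves implicit in its appeal to ``the definition of (interventional) causality''. Your aside on two simultaneous arguments needs more than non-constancy: for example, $f(y,z)=y\oplus z$ with $Z$ balanced leaves the marginal of $X^\star$ unchanged under $\dointv{Y}{\bar y}$. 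That case, however, lies outside the single-argument statement.
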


\emph{Proof.} Consider the generic model in \Cref{fig:proof2}(a) representing a simple fork affected by the intentional intervention $\iiintv{X^\star}{f(y)}$, which corresponds to the agent having knowledge of $Y$ and intervening on $X$. In the intention discovery setup we have knowledge of the causal graph structure $\scmdag$, knowledge of an intentional intervention happening on $X^\star$, and a dataset $\mathcal{D}=\setNelems{X^\star_i,Y^\star_i,Z^\star}$.

Let us now consider intervening on the descendant $\desc(X)$ of $X$.
First, consider intervening on $Z$ by $\dointv{Z}{\bar{z}}$ and then letting the agent take its intentional intervention. This leads to the SFM in \Cref{fig:proof2}(b). It is immediate to see that we can now empirically evaluate whether $Z$ affects $X^\star$ via causal discovery. By definition of (interventional) causality, $\nexists \bar{z}, z'$, $\bar{z} \neq z'$ such that, ceteribus paribus, $P(X^\star,Y^\star,\bar{z}) \neq P(X^\star,Y^\star,z')$.

Conversely, let us now consider intervening on $Y$ by $\dointv{Y}{\bar{y}}$ and then letting the agent take its intentional intervention. This leads to the SFM in \Cref{fig:proof2}(c). It is immediate to see that we can now empirically evaluate whether $Y$ affects $X^\star$ via causal discovery. By definition of (interventional) causality, $\exists \bar{y}, y'$, $\bar{y} \neq y'$ such that, ceteribus paribus, $P(X^\star,Z^\star,\bar{y}) \neq P(X^\star,Z^\star,y')$.

Notice that the proof trivially generalizes to simple forks with more children, as we can empirically discover which variable affects the intentional intervention of the agent via atomic interventions. $\blacksquare$\\

\section{Simulations}\label{app:simulations}
In this section, we provide illustrative numerical results for the models presented in the examples throughout the main paper. All simulations are openly available at \url{https://anonymous.4open.science/r/FinalModels-C026}.

\subsection{Heating model for intentional agent discovery}
We first implemented the model of \Cref{ex:heating_causal}. 

\paragraph{Causal Model.}
Let $\scmu{caus}=\scmsignature$ be the heating model defined as:
\begin{itemize}
    \item $\envars = \myset{W,T,H}$;
    \item $\exvars = \myset{U_W,U_T,U_H}$;
    \item $\structfuncs = \myset{f_W,f_T,f_H}$ where:
    \begin{itemize}
        \item $f_W = U_W$;
        \item $f_H = U_H$;
        \item $f_T = W+H+U_T$;
    \end{itemize}
    \item $\probdists = \myset{P_W,P_T,P_H}$ where:
    \begin{itemize}
        \item $P_W = \mathtt{Bern}(0.5)$;
        \item $P_H = \mathtt{Bern}(0.5)$;
        \item $P_T = \mathtt{N}(0,10^{-20})$;
    \end{itemize}
    with $\mathtt{Bern}(p)$ being a Bernoulli distribution with probability $p$ and $\mathtt{N}(\mu,\sigma^2)$ being a Normal distribution with mean $\mu$ and variance $\sigma^2$.
\end{itemize}
Notice that the variable temperature ($T$) behaves almost as a perfect deterministic function of heating ($H$) and weather ($W$); however, we add a negligible source of noise to allow for statistical testing.

\paragraph{DAG and d-separations (causal).}
Our causal model immediately implies the DAG $\mathcal{G}_{caus}$ in \Cref{fig:basic-heating-system}(a). We use the DAG to confirm the following elementary d-separations:
\begin{align*}
H\perp_{\mathcal{G}_{caus}} & W\\
H\not\perp_{\mathcal{G}_{caus}} & W\vert T\\
H\not\perp_{\mathcal{G}_{caus}} & T
\end{align*}

\paragraph{Data and independencies (causal).}
We also use the model $\scmu{caus}$ to generate causal data $\mathcal{D}_{caus}$ by sampling. We sample $10^4$ datapoints and then we run independence testing using Fisher z-test and taking a decision at p-value $0.05$ (see the simulations for the actual computed p-values). We confirm the following independencies:
\begin{align*}
H\perp_{\mathcal{D}_{caus}} & W\\
H\not\perp_{\mathcal{D}_{caus}} & W\vert T\\
H\not\perp_{\mathcal{D}_{caus}} & T
\end{align*}

\paragraph{Markovianity.}
The d-separations in the causal graph are reflected by independencies in the causal data. Markovianity is not violated.

\paragraph{Intentional agent.}
We now simulate an intentional agent that decides to turn on the heater in case the temperature is low. We express it in the following mechanism:
\begin{itemize}
    \item $f_{H}^{\star}=\begin{cases}
        1 & \textrm{if }T<0.5\\
        0 & \textrm{else}
\end{cases}$
\end{itemize}

\paragraph{Data and independencies (final).}
We now use the modified program to generate final data $\mathcal{D}_{fin}$ by sampling. Again, we sample $10^4$ datapoints and then we run independence testing using Fisher z-test and taking a decision at p-value $0.05$ (see the simulations for the actual computed p-values). We immediately observe the following:
\begin{align*}
H\not\perp_{\mathcal{D}_{fin}} & W
\end{align*}

\paragraph{Markovianity.}
We face now a discrepancy between the d-separations in the causal graph and the indepenencies from the final data. Markovianity is violated, suggesting that the causal DAG does not explain this data anymore. We use this violation as a pointer to the presence of a intentional intervention.

\paragraph{Final Model.}
We propose to solve the discrepancy by relying on a SFM $\scmu{fin}=\scmsignature$ defined as below:
\begin{itemize}
    \item $\envars = \myset{W,T,H,W^\star,T^\star,H^\star}$;
    \item $\exvars = \myset{U_W,U_T,U_H}$;
    \item $\structfuncs = \myset{f_W,f_T,f_H,f_{W^\star},f_{T^\star},f_{H^\star}}$ where:
    \begin{itemize}
        \item $f_W = f_{W^\star} = U_W$;
        \item $f_H = U_H$;
        \item $f_{H}^{\star}=\begin{cases}
            1 & \textrm{if }T<0.5\\
            0 & \textrm{else}
            \end{cases}$
        \item $f_T = f_{T^\star} = W+H+U_T$;
    \end{itemize}
    \item $\probdists = \myset{P_W,P_T,P_H}$ where:
    \begin{itemize}
        \item $P_W = \mathtt{Bern}(0.5)$;
        \item $P_H = \mathtt{Bern}(0.5)$;
        \item $P_T = \mathtt{N}(0,10^{-20})$.
    \end{itemize}
\end{itemize}

\paragraph{DAG and d-separations (final).}
Our SFM immediately implies the DAG $\mathcal{G}_{fin}$ in \Cref{fig:final-systems}(a). We use the DAG to confirm the following elementary d-separations:
\begin{align*}
H^\star\perp_{\mathcal{G}_{fin}} & W^\star\\
H^\star\not\perp_{\mathcal{G}_{fin}} & W^\star\vert T^\star\\
H^\star\not\perp_{\mathcal{G}_{fin}} & T^\star
\end{align*}
Notice that the d-separations are computed over the \dd{star} variables, as these are the one effectively observed by an experimenter.

\paragraph{Markovianity.}
Using a SFM allows us to restore Markovianity: the d-separations in the final graph are now reflected by independencies in the final data. Thus, teleological reasoning allowed us to detect an agent, and a SFM allows us to make sense of data and the underlying causal model.

\subsection{Smoking model for intention discovery}
Next, we implemented the model of \Cref{ex:smoking_causal}. 

\paragraph{Final Model.}
We define a SFM $\scmu{fin}=\scmsignature$ representing an agent smoking in order to experience pleasure:
\begin{itemize}
    \item $\envars = \myset{S,D,P,S^\star,D^\star,P^\star}$;
    \item $\exvars = \myset{U_S,U_D,U_P}$;
    \item $\structfuncs = \myset{f_S,f_D,f_P,f_{S^\star},f_{D^\star},f_{P^\star}}$ where:
    \begin{itemize}
        \item $f_S = U_S$;
        \item $f_{S}^{\star}=\begin{cases}
            1 & \textrm{if }P>1\\
            0 & \textrm{else}
            \end{cases}$
        \item $f_D = f_{D^\star} = 0.3S + U_D$;
        \item $f_P = f_{T^\star} = 0.5S + U_P + 1$;
    \end{itemize}
    \item $\probdists = \myset{P_S,P_D,P_P}$ where:
    \begin{itemize}
        \item $P_S = \mathtt{Bern}(0.5)$;
        \item $P_D = \mathtt{N}(0,1)$;
        \item $P_P = \mathtt{N}(0,1)$.
    \end{itemize}
\end{itemize}
This SFM implies the DAG $\mathcal{G}_{fin}$ in \Cref{fig:final-systems}(b). 

\paragraph{Final data.}
We now use the SFM $\scmu{fin}$ to generate final data $\mathcal{D}_{fin}$. We will use this data, together with the DAG $\mathcal{G}_{fin}$ and interventional data, to discriminate the intention of the agent acting on the causal system.

First of all, we use the final data $\mathcal{D}_{fin}$ to estimate the probability of smoking $P_{\scmu{fin}}(S^\star)$ in the SFM. \Cref{fig:ID-distributions}(a) shows the empirical distribution computed from the data $\mathcal{D}_{fin}$.

\paragraph{Intervening on the damage.}
We consider first intervening on damage. We implement the standard intervention $\dointv{D}{0}$, representing the removal of dangerous effects from smoking. We then collect data $\mathcal{D}_{fin\vert\dointv{D}{0}}$ from the model $\scmu{fin}$  under our intervention. We then use this data to estimate the probability of smoking $P_{\scmu{fin}}(S^\star \vert \dointv{D}{0})$. \Cref{fig:ID-distributions}(b) shows the empirical distribution computed from the data $\mathcal{D}_{fin\vert\dointv{D}{0}}$. The absence of any change suggests that the intention of the agent is not dependent on damage $D$.

\paragraph{Intervening on the pleasure.}
Last, we consider intervening on pleasure. We implement the standard intervention $\dointv{P}{0}$, representing the removal of pleasurable effects from smoking. We then collect data $\mathcal{D}_{fin\vert\dointv{P}{0}}$ from the model $\scmu{fin}$  under our intervention. We then use this data to estimate the probability of smoking $P_{\scmu{fin}}(S^\star \vert \dointv{P}{0})$. \Cref{fig:ID-distributions}(c) shows the empirical distribution computed from the data $\mathcal{D}_{fin\vert\dointv{P}{0}}$. We now notice a significant distribution shift, pointing to the fact that manipulating pleasure $P$ has a bearing on the intention of the agent.

\begin{figure*}
    \centering
    \begin{tikzpicture}
        [node distance = 1cm, thick,
        arrow/.style = {draw, ->},
        circ/.style = {draw, ellipse}]

        \begin{scope}[xshift=-40]
            \node[] at (0,0) {\includegraphics[scale=0.35]{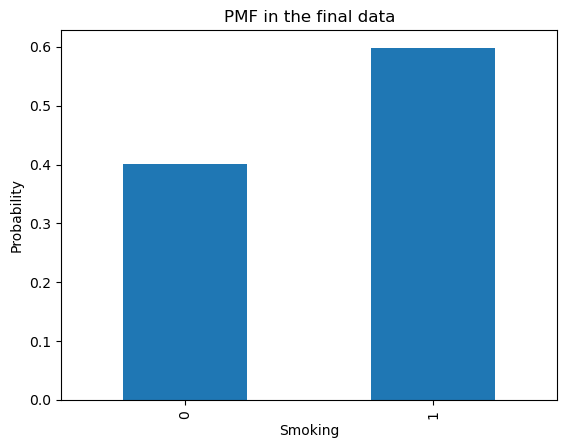}};

            \node at (0,-2.5) {(a)};
        \end{scope}

        \begin{scope}[xshift=120]
            \node[] at (0,0) {\includegraphics[scale=0.35]{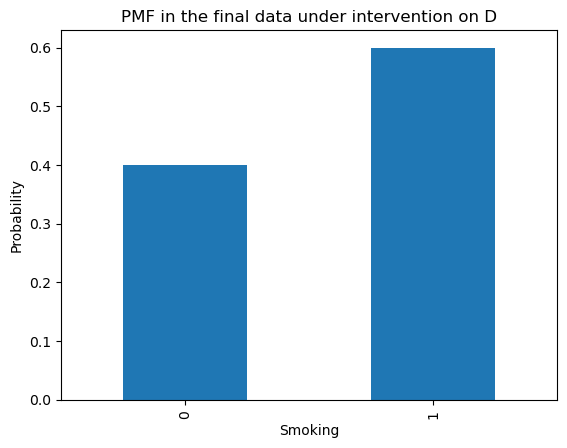}};

            \node at (0,-2.5) {(b)};
        \end{scope}

        \begin{scope}[xshift=280]
            \node[] at (0,0) {\includegraphics[scale=0.35]{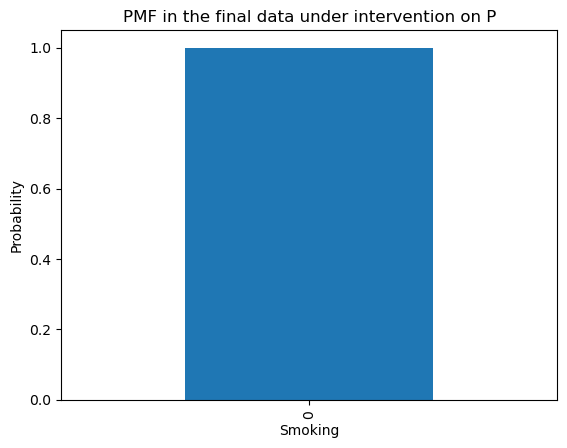}};

            \node at (0,-2.5) {(c)};
        \end{scope}
        
        \end{tikzpicture}
    \caption{Probability mass functions computed during intention discovery: (a) $P_{\scmu{fin}}(S^\star)$; (b) $P_{\scmu{fin}}(S^\star \vert \dointv{D}{0})$; (b) $P_{\scmu{fin}}(S^\star \vert \dointv{P}{0})$.}
    \label{fig:ID-distributions}
\end{figure*}

\end{document}